  \providecommand\BibTeX{{%
    \normalfont B\kern-0.5em{\scshape i\kern-0.25em b}\kern-0.8em\TeX}}}
\def \TenseLogic{\CTLs}
\newcommand{\atom}[1]{\sffamily{#1}}
\newcommand{\matom}[1]{\mathsf{#1}}
\begin{document}
\title[Algorithmic Ethics]{Algorithmic Ethics: Formalization and Verification of Autonomous Vehicle Obligations}

\author{Colin Shea-Blymyer}
\email{sheablyc@oregonstate.edu}

\author{Houssam Abbas}
\email{houssam.abbas@oregonstate.edu}
\affiliation{%
  \institution{Oregon State University}
}

\renewcommand{\shortauthors}{Shea-Blymyer and Abbas}

\begin{abstract}
  We develop a formal framework for automatic reasoning about the obligations of autonomous cyber-physical systems, including their social and ethical obligations.
  Obligations, permissions and prohibitions are distinct from a system's mission, and are a necessary part of specifying advanced, adaptive AI-equipped systems. 
  They need a dedicated deontic logic of obligations to formalize them.
  \yhl{Most existing deontic logics lack corresponding algorithms and system models that permit automatic verification.}
  We demonstrate how a particular deontic logic, Dominance Act Utilitarianism (DAU)~\cite{Horty01DLAgency}, is a suitable starting point for formalizing the obligations of autonomous systems like self-driving cars.
  We demonstrate its usefulness by formalizing a subset of Responsibility-Sensitive Safety (RSS) in DAU; RSS is an industrial proposal for how self-driving cars should and should not behave in traffic.
  We show that certain logical consequences of RSS are undesirable, indicating a need to further refine the proposal.
  We also demonstrate how obligations can change over time, which is necessary for long-term autonomy. 
  We then demonstrate a model-checking algorithm for DAU formulas on weighted transition systems, 
  and illustrate it by model-checking obligations of a self-driving car controller from the literature.
\end{abstract}


\setcopyright{acmcopyright}
\acmJournal{TCPS}
\acmYear{2021} \acmVolume{1} \acmNumber{1} \acmArticle{1} \acmMonth{1} \acmPrice{15.00}\acmDOI{10.1145/3460975}

\begin{CCSXML}
<ccs2012>
   <concept>
       <concept_id>10010520.10010553.10010554.10010556</concept_id>
       <concept_desc>Computer systems organization~Robotic control</concept_desc>
       <concept_significance>500</concept_significance>
       </concept>
   <concept>
       <concept_id>10010147.10010341.10010342.10010343</concept_id>
       <concept_desc>Computing methodologies~Modeling methodologies</concept_desc>
       <concept_significance>300</concept_significance>
       </concept>
   <concept>
       <concept_id>10010147.10010341.10010342.10010344</concept_id>
       <concept_desc>Computing methodologies~Model verification and validation</concept_desc>
       <concept_significance>300</concept_significance>
       </concept>
   <concept>
       <concept_id>10010147.10010178.10010187</concept_id>
       <concept_desc>Computing methodologies~Knowledge representation and reasoning</concept_desc>
       <concept_significance>500</concept_significance>
       </concept>
   <concept>
       <concept_id>10010147.10010178.10010199.10010204</concept_id>
       <concept_desc>Computing methodologies~Robotic planning</concept_desc>
       <concept_significance>300</concept_significance>
       </concept>
   <concept>
       <concept_id>10003752.10003790.10011192</concept_id>
       <concept_desc>Theory of computation~Verification by model checking</concept_desc>
       <concept_significance>300</concept_significance>
       </concept>
   <concept>
       <concept_id>10003752.10003790.10002990</concept_id>
       <concept_desc>Theory of computation~Logic and verification</concept_desc>
       <concept_significance>300</concept_significance>
       </concept>
 </ccs2012>
\end{CCSXML}

\ccsdesc[500]{Computer systems organization~Robotic control}
\ccsdesc[300]{Computing methodologies~Modeling methodologies}
\ccsdesc[300]{Computing methodologies~Model verification and validation}
\ccsdesc[500]{Computing methodologies~Knowledge representation and reasoning}
\ccsdesc[300]{Computing methodologies~Robotic planning}
\ccsdesc[300]{Theory of computation~Verification by model checking}
\ccsdesc[300]{Theory of computation~Logic and verification}

\keywords{Deontic logic, Autonomous vehicles, Model checking, Responsibility-Sensitive Safety, Dominance Act Utilitarianism.}

\maketitle

\section{Introduction}
\label{sec:intro}
The need for embodied Cyber-Physical Systems (CPS) that are fully autonomous, update their own objectives and interact with us in our daily lives is more obvious today than ever. 
To cite one example, the Covid-19 pandemic has highlighted the need for nursing robots that can check on patients in high-risk situations, self-driving vehicles that deliver essential goods to people who cannot get them, and companion robots that understand and adapt to different living situations like those of elderly people or incapacitated persons.
We refer to these different types of systems as {\itshape human-scale CPS} : embodied CPS that interact with humans and their environment, and are perceived as being reasonably intelligent and autonomous. 
The common thread to all of these applications is that the autonomous CPS is seen as just another agent in our environment, and our interactions with it assume a wide range of social expectations built through our interactions with other humans. 
Indeed, the success of these systems depends on their respect for these social norms of interaction, and more particularly on the robot's respect for \textit{ethical guidelines} that are as necessary as they are ambiguous. 
These obligations are distinct from the CPS' mission, which is, for example, to go from A to B without collisions.
Obligations place constraints on how the CPS achieves its mission, and might be violated.
Safety and performance are no longer sufficient criteria for a successful CPS design: ethical, and more generally social, obligations must be formalized, verified, and where possible, enforced. \yhl{In this work we tackle the challenges of formalizing ethical obligations in a useful and interpretable way, analyzing the properties of these obligations, and automatically verifying that a system has given obligations.}

In the fields of Artificial Intelligence (AI) and Logic, the formalization of ethical and social obligations dates back at least to Mally~\cite{Mally26,dlMallyStanford}, with most of the focus going towards developing the `right' logics and simulation-based studies of normative systems. 
\yhl{While these logics are interpretable, they lack system models of the agents under obligation. In \cite{Gerdes2015}, limited ethical requirements are modeled in the costs and/or constraints of a classical optimal control problem. Precisely defining the costs and constraints that embed ethics into a control problem is challenging, and providing a high-level interpretation of what a particular choice logically entails is generally not possible. E.g. how does the behavior change qualitatively if a slack variable is increased or a weight is decreased?}
In CPS, The formalization, verification, and enforcement of ethical and social obligations has not been adequately tackled. 
This paper develops a formal framework and tool for the analysis of the ethical obligations of human-scale CPS, with applications in self-driving cars.

The formal verification and control of CPS safety and performance has relied on \textit{alethic temporal logics}, like Linear Temporal Logic~\cite{Pnueli77sfcs}, to express behavioral specifications of system models.
Alethic logic is the logic of \textit{necessity and possibility}:
for example, if $p$ is a predicate, $\always p$ says that $p$ is true in every accessible world - that is, $p$ is necessary.
Possibility is then formalized as $\eventually p \defeq \neg \always \neg p$: saying that $p$ is possible is the same as saying that it is not the case that $\neg p$ is necessary.
And so on.
The best known instantiation of this in Verification is LTL~\cite{MannaP92}, in which an accessible world is a moment in the linear future.
Thus $\always p$ formalizes `$p$ is true in every future moment', and $\eventually p$ formalizes `$p$ is true in some future moment'.
It is natural to want to leverage alethic logics and associated tools to formalize and study CPS obligations as well. 
However, it has been understood for over 70 years that \textit{the logic of obligations is different from that of necessity}~\cite{dlStanford}: applying alethic logic rules to obligation statements can lead to conclusions that are intuitively paradoxical or undesirable.
Consider the following statements:
\begin{enumerate}[A.]
	\item The car \textit{will} eventually change lanes: this is a statement about \yhl{necessity}. It says nothing about whether the car plays an active role in the lane change (e.g., perhaps it will hit a slippery patch), or whether it \textit{should} change lanes.
	\item The car \textit{can} change lanes: this is a statement about ability. The car might be able to do something, but does not \yhl{actually} do it.
	\item The car \textit{sees to it that} it changes lanes: this is a statement about agency. It tells us that the car ensures that it changes lanes. \footnote{The `see to it' phraseology is very common in Logic and we use it in this paper.} 
	I.e., it is an active agent in the lane change.	
	\item The car \textit{ought} to change lanes: this is a statement about obligation. The car, for example, might fail to meet its obligation, either by choice or because it can't change lanes.
\end{enumerate}
These are qualitatively different statements and there is no a priori equivalence between any two of them.
The logic we adopt should reflect this: its operators and inference rules should model these aspects \textit{in the logic}, without having to add new atomic propositions for every new concept and situation that occurs to us.
Alethic logics like LTL cannot do so.\footnote{Anderson and Kanger attempted a reduction of obligations to alethic logic. See~\cite[Section 3]{dlStanford} for a discussion.}

We now give a simple but fundamental example, drawn from~\cite{dlStanford}, illustrating this inability of alethic logic.
%
One might be tempted to formalize obligation using the necessity operator $\always$: that is, formalize `The car ought to change lane' by $\always \texttt{change-lane}$.
However, in alethic logic, $\always p \implies p$: if $p$ is necessarily true then it is true.
If we use $\always$ for obligation this reads as $\mathbf{Obligatory}~p \implies p$: this inference is clearly unacceptable because agents sometimes violate their obligations so some obligatory things are not true.
This leads to the question of what an agent ought to do when some primary obligations are violated. 
I.e. the study of statements of the form $\mathbf{Obligatory}~p \land \neg p \implies ...$.
This is not possible if obligation is formalized using $\always$ in pure alethic logic, since $\always p \land \neg p \implies q$ is trivially true for \textit{any} $p$ and $q$.

\textit{Deontic logic}~\cite{DLHandbook} has been developed specifically to reason about obligations, starting with von Wright~\cite{vonWright51DL}. 
It is used in contract law, including software contracts, and is an active area of research in Logic-based AI~\cite{KulickiTMDeon18}.
There are many flavors of deontic logic~\cite{McNamaraChapter}.
In this paper, we adopt the logic of Dominance Act Utilitarianism (DAU) developed by Horty~\cite{Horty01DLAgency} because it explicitly models all four aspects above: necessity, agency, ability and obligation.
We first extend DAU to formalize the obligations of human-scale CPS with complex missions.
We then formalize a subset of Intel's Responsibility-Sensitive Safety, or RSS, in DAU~\cite{RSSv6}. 
RSS proposes a set of rules to be followed by self-driving cars to avoid collisions.
To promote `naturalistic driving', RSS places an \textit{obligation} to avoid aggressive driving while giving \textit{permission} to drive assertively.
Using our DAU formalization of RSS, we demonstrate that RSS allows a car to facilitate an accident in traffic, clearly an undesirable position; this points to the need to further refine the RSS proposal. 

We develop the first model-checking algorithm for DAU formulas, to determine whether a system model has a given obligation or not. 
We implemented the model-checker and present results on a self-driving car controller.
An obligation constitutes a constraint on the CPS controller, and can be integrated into the controller's objective;
thus designing obligations and checking them is conceptually akin to reward shaping in Reinforcement Learning~\cite{Wiewiora10RewardShaping}.

When studying an autonomous CPS' obligations, it is also necessary to analyze how these obligations change over time, as a result of the agent's choices~\cite{Brunel08Propagation}.
For example if I ought to visit an ill relative today or tomorrow, and I don't visit them today, then it's reasonable to say that tomorrow, my residual obligation is to visit them. 
It is important that the formal conclusions yielded by the logic match such intuitive conclusions, in order to build trust in human-scale CPS.
We prove obligation propagation patterns for obligations expressed in DAU.

Our contributions in this paper are to\footnote{A preliminary conference version of this work appeared in~\cite{SheaBlymyerHSCC20}. This paper adds the analysis of temporal propagation (\yhl{item 3 above}), improves the RSS formalization significantly and formalizes assertive driving \yhl{--- a model of a social permission} (\yhl{in item 2}), adds a model-checking algorithm for conditional obligations, \yhl{as the original can not find histories that satisfy a condition} (\yhl{in item 4}) and implements both model-checkers and demonstrates their use (\yhl{item 5}).}:
\begin{enumerate}
	\item formalize the obligations of RSS in DAU, and highlight the subtle decisions that need to be made when developing a rigorous specification;
	\item derive undesirable consequences of the RSS obligations, pointing to the need for further refinements of RSS;
	\item demonstrate patterns for temporal propagation of obligations in DAU, \yhl{allowing evaluation of obligations inheritance};
	\item develop a model-checking algorithm of DAU specifications that allows us to establish whether a system has a given obligation or not; and
	\item implement the model-checker and demonstrate its use on a self-driving car from the litterature.
\end{enumerate}

\paragraph{Paper Organization.}
Section~\ref{sec:dau} defines DAU.
Section~\ref{sec:formalizing rss in dau} gives a first case study: the formalization of a subset of RSS rules in DAU, and some of their logical consequences.
Section~\ref{sec:obligation propagation} proves propagation patterns that hold in DAU.
Section~\ref{sec:mc dau} gives a model-checking algorithm for absolute and conditional DAU statements.
Section~\ref{sec:case-study-in-model-checking-ethical-decisions-in-self-driving-cars} demonstrates the use of the model-checker on a highway driving controller from the literature. 
Related work is reviewed in Section~\ref{sec:related work},
and Section~\ref{sec:conclusions} concludes the paper.

\section{Dominance Act Utilitarianism}
\label{sec:dau}
We adopt the logic of Dominance Act Utilitarianism (DAU) developed by Horty~\cite{Horty01DLAgency} because it explicitly models all four aspects listed in the Introduction: necessity, agency, ability and obligation.
It includes a temporal logic as a component so we can describe temporal behaviors essential to system design, 
and it uses branching time, essential for modeling uncontrollable environments.
It has an intrinsic computational structure which makes it appealing for CPS verification and control purposes: the agent's obligations are derived from maximizing utility, so DAU can be viewed as the deontic logic of utility maximization in non-deterministic systems.
As such, it gives a \textit{logical interpretation} to the behavior of systems that maximize utility, such as~\cite{Gerdes2015}.
This section summarizes the main aspects of DAU developed in~\cite{Horty01DLAgency}.

\paragraph{Syntax.}
Let $\Agent$ be a finite set of agents, which represent, for example, the cars in traffic.
A DAU formula is obtained as follows:
\begin{equation*}
A \defeq \formula~|~\neg A~|~A\land A~|~~\cstitaa~~|~~\dstit{\alpha}{A}~~|~~\Ostit{\alpha}{A}~~|~~\OstitC{\alpha}{A}{\formula}~~|~~\Next A
\end{equation*}
where $\alpha \in \Agent$, $\land,\neg$ are the usual boolean connectives, and $\formula$ is a formula in the logic \TenseLogic.
We use \TenseLogic~to specify the CPS' mission and to describe states of affairs in the world. 
We give the informal description of \TenseLogic~operators and refer the reader to~\cite{ClarkeGP99} for formal semantics: the temporal operator $\always$ means Always (now and in every future moment along this trace), $\eventually$ means Eventually (now or at some future moment along this trace), and $\release$ means Release: $\formula \release \psi$ means that either $\psi$ always holds, or it does not hold at some future moment and sometime before then $\formula$ holds.
The path quantifier $\forall$ means For all paths, and $\exists$ means There exists a path.
The DAU-specific operators informally mean the following:
$\cstitaa$ is the agency operator and says that $\alpha$ sees to it, or ensures, that $A$ is true;
$\dstit{\alpha}{A}$ is a variant on $\cstitaa$;
$\Ostit{\alpha}{A}$ is the obligation modality and says that $\alpha$ ought to ensure that $A$ is true;
finally, $\OstitC{\alpha}{A}{\formula}$ says that under the condition $\formula$, $\alpha$ ought to ensure that $A$ is true.
The rest of this section gives the formal semantics of these deontic operators.

\paragraph{Branching time.}
Let $\Tree$ be a set of \textit{moments} with an irreflexive, transitive ordering relation $<$ such that for any three moments $m_1,m_2,m_3$ in $\Tree$, if $m_1<m_3$ and $m_2 < m_3$ then either $m_1<m_2$ or $m_2 < m_1$.
There is a unique \textit{root moment} which we denote by 0.
A \textit{history} is a maximal linearly ordered set of moments from $\Tree$: intuitively, it is a branch of the tree that extends infinitely into the future.
Given a moment $m \in \Tree$, the set of histories that go through $m$ is $H_m \defeq  \{ h \such m \in h\}$.
See Fig.~\ref{fig:choices}.
We will frequently refer to moment/history pairs $m/h$, where $m \in$ \textit{Tree} and $h \in H_m$.
\begin{definition}\cite[Def. 2.2]{Horty01DLAgency}
	\label{def:frame and model}
	With $AP$ a set of atomic propositions,
	a \emph{branching time model} is a tuple 
	\begin{math}
	\Model = (\Tree, <,v)
	\end{math}
	where $\Tree$ is a tree of moments with ordering $<$ and 
	$v$ is a function that maps moments $m$ in $\Model$ to sets of atomic propositions from $2^{AP}$\yhl{, the set of subsets of $AP$}.\footnote{In the DAU formulation of~\cite{Horty01DLAgency}, $v$ maps $m/h$ pairs, rather than moments $m$, to subsets of $AP$. This is more general but disagrees with the common usage of atomic propositions in CPS Verification, so we adopt this more classical definition of $v$. The ideas of this paper are best explained without such (currently) unnecessary generalities.}
\end{definition}
%
\begin{figure*}[t]
	\centering
	\includegraphics[height=2.25in,keepaspectratio]{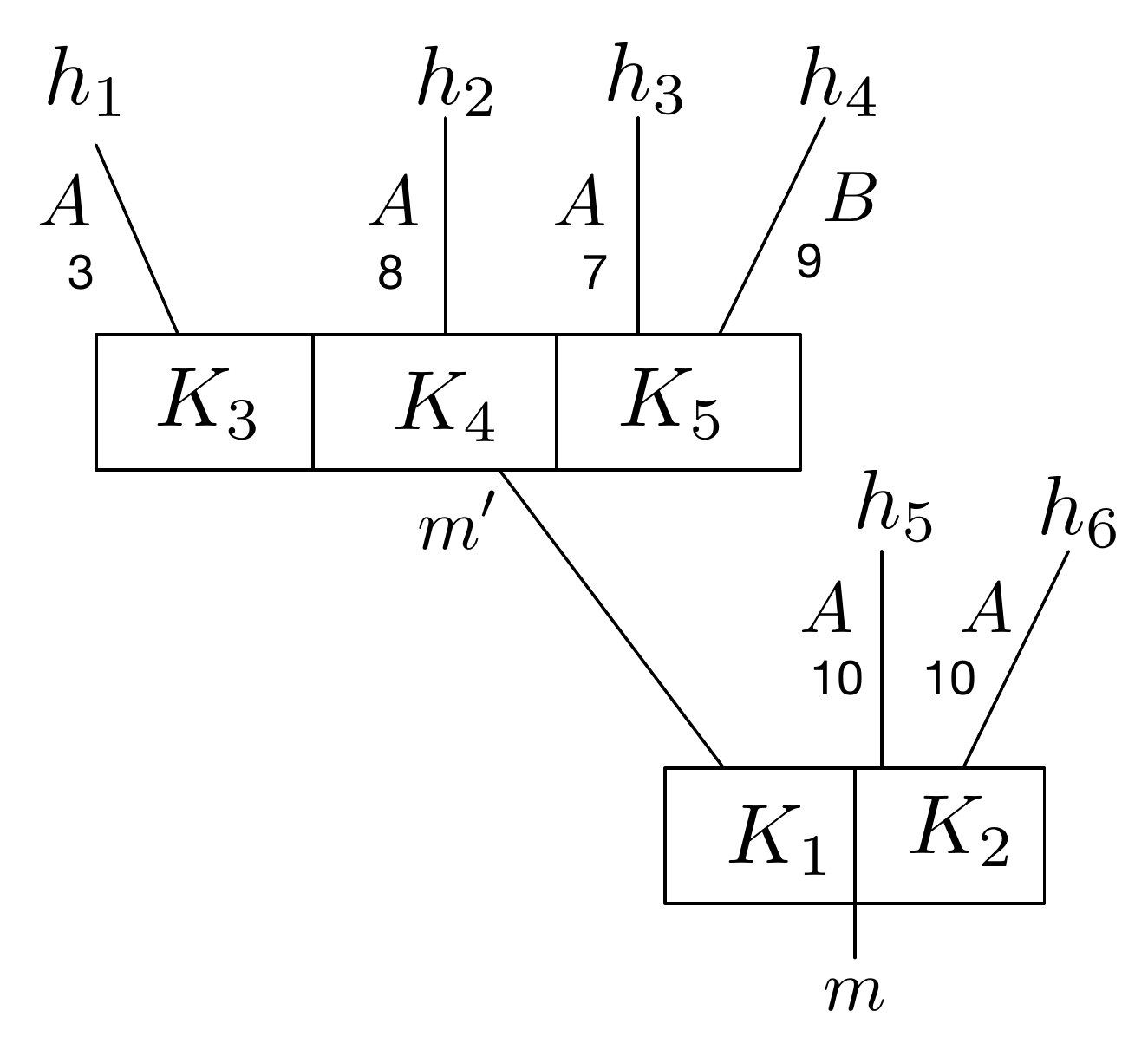}
	\caption{A utilitarian stit model for an agent $\alpha$ illustrating the main DAU definitions. Moments $m<m'$ with sets of histories $H_{m} = \{h_1,\ldots, h_6\}$ and $H_{m'}=\{h_1,\ldots, h_4\}$. 
		Each moment is marked with the actions available to $\alpha$ at that moment: $\Choiceam = \{K_1,K_2\}$ and $Choice_\alpha^{m'} = \{K_3,K_4,K_5\}$. 
	Action $K_2=\{h_5,h_6\}$ and $K_4=\{h_2\}$. 
	Each history is marked with the formula(s) that it satisfies at $m$ and with its value $\Value(h)$, e.g., $m/h_1$ satisfies $A$ and has value 3.
	$m/h_5 \models \cstitaa$ since $\Choiceam(h_5) = K_2$, and both $h_5$ and $h_6$ satisfy $A$. 
	On the other hand, $m/h_1 \not\models \cstit{\alpha}{A}$ since $\Choiceam(h_1) = K_1= \{h_1,h_2,h_3,h_4\}$ and $h_4$ does not satisfy $A$. 
	$\Optimalam = \{K_2\}$ so $m/h_5 \models \Ostit{\alpha}{A}$. 
	$\Optimal{\alpha}{m'} = \{K_4,K_5\}$ and so $\alpha$ has no obligations at $m'$ since there is no formula $\formula$ s.t. $|\formula|_{m'} \supseteq K_4 \cup K_5$ (See Def.~\ref{def:ought semantics}). 
	Finally, $m/h_5 \models \dstit{\alpha}{A}$ because $K_2 \subset |A|_m$ and $H_m \neq |A|_m=\{h_1,h_2,h_3,h_5,h_6\}$.} 
	\label{fig:choices}
\end{figure*}

In this paper, to retain a uniform satisfaction relation like~\cite{Horty01DLAgency}, we will speak of formulas holding or not at an $m/h$ pair and write $\Model, m/h \models \formula$, where it is always the case that $h\in H_m$.
When the formula is in \CTLs~there should be no confusion as a \CTLs~path formula is evaluated along $h$ and a state formula is evaluated at $m$.
Given a DAU statement $A$, the \textit{proposition} it expresses at moment $m$ is the set of histories where it holds starting at $m$
\begin{equation}
\label{eq:Amm}
|A|_m^\Model \defeq \{h \in H_m \such \Model, m/h \models A\}
\end{equation}
Where there's no risk of ambiguity, we drop $\Model$ from the notation, writing $|A|_m$, $m/h \models A$, etc.

\paragraph{Choice}
Consider an agent $\alpha \in \Agent$.
Formally, at $m$, an action $K$ is a subset of $H_m$: this is the subset of histories that are still realizable after taking the action.
At every moment $m$, $\alpha$ is faced with a choice of actions which we denote by $\Choiceam$.
So $\Choiceam \subset 2^{H_m}$.
See actions in Fig.~\ref{fig:choices}.
$\Choiceam$ must obey certain constraints given in the Supplementary material.
In what follows, $\Choiceam$ is assumed finite for every $\alpha$ and $m$. 

\paragraph{Agency}
Agency is defined via the `Chellas sees to it' operator $cstit$, named after Brian Chellas~\cite{chellas1968logical}.
Intuitively, an agent \textit{sees to it}, or ensures, that $A$ holds at $m/h$ if it takes an action $K$ s.t., whatever other history $h'$ could've resulted from $K$, $A$ is true at $m/h'$ as well. 
I.e., the non-determinism does not prevent $\alpha$ from guaranteeing $A$.
\begin{definition}[Chellas cstit]~\cite[Def. 2.7]{Horty01DLAgency}
	\label{def:cstit}
	With agent $\alpha$ and DAU statement $A$, let $\Choiceam(h)$ be the unique action that contains $h$. Then
	\[\Model,m/h \models [\alpha\,cstit:A] \text{ iff } \Choiceam(h) \subseteq |A|_m^\Model\]
	\end{definition}
If $K\subseteq |A|_m$ we say $K$ \textit{guarantees} $A$.
See Fig.~\ref{fig:choices}.
A \textit{deliberative stit} operator is also defined, which captures the notion that an agent can only truly be said to do something if it also has the choice of not doing it. See Fig.~\ref{fig:choices}.

 \begin{definition}[Deliberative stit]~\cite[Def. 2.8]{Horty01DLAgency}
	\label{def:dstit}
		With agent $\alpha$ and DAU statement $A$, 
		\[\Model,m/h \models \dstit{\alpha}{A} \text{ iff } \Choiceam(h) \subseteq |A|_m^\Model \text{ and } |A|_m^\Model \neq H_m\]
\end{definition}
Operators $cstit$ and $dstit$ are not interchangeable and fulfill complementary roles.

\paragraph{Optimal actions.}
To speak of an agent's obligations, we will need to speak of `optimal actions', those actions that bring about an ideal state of affairs.
Let $\Value: H_{0}\rightarrow \Re$ be a \textit{value function} that maps histories of $\Model$ to \textit{utility values} from the real line $\Re$.
This value represents the utility associated by all the agents to this common history.
Given two sets of histories $Z$ and $Y$ , we order them as 
\begin{equation}
\label{eq:proposition dominance}
Z\leq Y \text{ iff } \Value(h) \leq \Value(h') \quad \forall~h\in Z, h' \in Y
\end{equation}
Let 
$\State_{\alpha}^m \defeq \Choice{\Agent \setminus \{\alpha\}}{m}$
be the set of \textit{background states} against which $\alpha$'s decisions are to be evaluated.
These are the choices of action available to other agents.
Given two actions $K,K'$ in $\Choiceam$, 
$K\preceq K' \text{ iff } K \cap S \leq K'\cap S \textrm{ for all } S \in \State_{\alpha}^m$.
That is, $K'$ dominates $K$ iff it is preferable to it regardless of what the other agents do (known as \textit{sure-thing reasoning}).
Strict inequalities are naturally defined.
Optimal actions are given by
\begin{equation}
\label{eq:optimalam}
\Optimalam \defeq \{K \in \Choiceam \such \not\exists K' \in \Choiceam\textrm{ s.t. }K\prec K'\}
\end{equation}
$\Optimalam$ is non-empty in models with finite $\Choiceam$~\cite[Thm. 4.10]{Horty01DLAgency}.

\paragraph{Dominance Ought}
We are now ready to define Ought statements, i.e., obligations.
Intuitively we will want to say that at moment $m$, agent $\alpha$ \textit{ought to see to it} that $A$ iff $A$ is a necessary condition of all the histories considered ideal at moment $m$.
This is formalized in the following \textit{dominance Ought operator}, which is pronounced ``$\alpha$ ought to see to it that $A$ holds''.
\begin{definition}[Dominance Ought]
	\label{def:ought semantics}
With $\alpha$ an agent and $A$ an obligation in a model $\Model$, 
\begin{equation}
\label{eq:ought}
\Model,m/h \models \Ostit{\alpha}{A} \text{ iff } K \subseteq |A|_m^{\Model}~~\text{ for all } K \in \Optimalam
\end{equation}
\end{definition}
See Fig.~\ref{fig:choices} for examples.
The dominance ought satisfies a number of intuitive logical properties; we refer the reader to~\cite[Ch. 4]{Horty01DLAgency}.
The dual of the Ought is (weak, a.k.a. negative) Permission:
\[\Perm{\alpha}{A} \defeq \neg \Ostit{\alpha}{\neg A}\]
The intuitive meaning of permission is that $\alpha$ can ensure $A$ without violating any obligations.
Moreover, having a permission does not imply that one actually sees to it that $A$ is true.
This is quite different from $\eventually A$, which simply says that $A$ actually happens, and from $\exists \cstitaa$ which says that $\alpha$ can ensure $A$, neither of which refers to obligations.
\paragraph{Conditional obligation}
It is often necessary to say that an obligation is imposed only under certain conditions. 
Let $X$ be a proposition, i.e. $X=|\formula|_m$ for some $\formula$.
The choice of actions available to $\alpha$ at $m$ under the condition that $X$ holds is defined as $\Choiceam/X \defeq \{K\in \Choiceam \such K\cap X \neq \emptyset\}$. 
This is the right definition because non-determinism might make it impossible to have $K\subseteq X$ (i.e., an action that guarantees $X$), but future actions might still ensure the finally realized history will satisfy $X$.
Thus in Fig.~\ref{fig:choices} $\Choiceam/B=\{K_1\}$. 
Conditional dominance is then defined by comparing only histories that satisfy $\formula$:
for two actions $K,K'$ from $\Choiceam$, $K\preceq_X K'$ iff 
$K\cap S \cap X \leq K'\cap S \cap X$ for all $S \in \State_\alpha^m$.
%
The \textit{conditionally optimal actions} are then 
\begin{equation}
\label{eq:Optimalam/X}
\Optimalam/X \defeq \{K \in \Choiceam/X \such \not\exists K' \in \Choiceam/X \textrm{ s.t. } K\prec_X K'\}
\end{equation}
Finally, where $A$ is an obligation and $\formula$ a formula in the underlying temporal logic, the conditional Ought is defined by
\begin{equation}
\label{eq:conditional ought}
\Model, m/h \models \OstitC{\alpha}{A}{\formula} \text{ iff } K\subseteq |A|_m^\Model \,\, \forall K \in \Optimalam/|\formula|_m^\Model.
\end{equation}

We note that conditional obligation is \textit{not} the same as $\formula \implies \Ostit{\alpha}{\formula}$. 
Conditional obligation only compares $\formula$-satisfying histories, while this latter formula still compares all histories.


\paragraph{Terminology abuse}
In what follows, histories that belong to optimal actions will be called optimal. 

\section{Case Study in Modeling: Responsibility-Sensitive Safety for Self-Driving Cars}
\label{sec:formalizing rss in dau}
Responsibility-Sensitive Safety, or RSS, is a proposal put forth by Intel's Mobileye division~\cite{RSSv6}.
It proposes rules or requirements that, if followed by all cars in traffic, would lead to zero accidents.
RSS attempts to promote a natural way of driving by drawing the line between acceptable assertive driving, and unacceptable aggressive driving.
We consider these notions, of assertive vs. aggressive driving, to be fundamentally \textit{social} because they refer to what a particular society accepts. 
Thus we may say that RSS places an \textit{obligation} to avoid aggressive driving while giving \textit{permission} to drive assertively.
The RSS proposal is expressed in the language of continuous-time dynamical systems and ordinary differential equations, but the rules to be followed are not formalized logically, so it is not possible to reason about them or derive their logical consequences.
This work complements the dynamical equations-based presentation of RSS in~\cite{RSSv6} with a deontic logic formalism.
We have three objectives in doing so:
demonstrating the usefulness of DAU in a real use case; 
highlighting the ambiguities implicit in such proposals, which would go unnoticed without formalization;
and automating the checking of logical consistency and deriving of conclusions.
We first present the RSS rules in natural language (Section~\ref{sec:rss rules}), then their formalization (Section~\ref{sec:rss formalization}), and finally we analyze the rules' logical consequences. 
Three important points must be made:
\begin{enumerate}[(A)]
	\item The formalization does not depend on the dynamical equations that govern the cars because we wish our conclusions to be independent of these lower-level concerns. 
	This is consistent with the standard AV control architecture where a logical planner decides what to do next (`change lanes' or `turn right') and a lower-level motion planner executes these decisions. 
	Our logical analysis concerns the logical planner.
	\label{note first}
	\item We are not trying to formalize general traffic laws~\cite{Althoff15TrafficRules} or driving scenarios, which is outside the scope of this paper. We are only formalizing the RSS rules. 
	\item Every formalization, in any logic, can always be refined. We are not aiming for the most detailed formalization; we aim for a useful formalization.
	\label{note last}
\end{enumerate}

\subsection{The RSS rules}
\label{sec:rss rules}
The rules for Responsibility-Sensitive Safety are~\cite{RSSv6}:
\begin{enumerate}[RSS1.]
	\item Do not hit someone from behind. \label{rss-behind}
	\item Do not cut-in (to a neighboring lane) recklessly. \label{rss-cutin}
	\item Right-of-way is given, not taken. \label{rss-row}
	\item Be careful of areas with limited visibility. \label{rss-vis}
	\item If you can avoid an accident without causing another one, you must do it. \label{rss-avoid}
	\item To change lanes, you do not have wait forever for a perfect gap: i.e., you do not have to wait for a gap large enough to get into even when the other car, already in the lane, maintains its current motion.\label{rss-assertive}
\end{enumerate}

RSS\ref{rss-assertive} is derived directly from the following in~\cite[Section 3]{RSSv6}: ``the interpretation [of the duty-of-care law] should lead to [...] an agile driving policy rather than an overly-defensive driving which inevitably would confuse other human drivers and will block traffic [...].
As an example of a valid, \underline{but not useful}, interpretation is to assume that in order to be ``careful'' our
actions should not affect other road users. Meaning, if we want to change lane we should find a gap large enough such
that if other road users continue their own motion uninterrupted we could still squeeze-in without a collision. Clearly,
for most societies this interpretation is \underline{over-cautious and will lead the AV to block traffic and be non-useful}.''
Note that, consistently with points (\ref{note first})-(\ref{note last}) above, this is stated without any reference to dynamics or specific scenarios. 
The RSS authors are concerned that overly cautious driving might lead to unnatural traffic, so RSS aims to allow cars to move a bit assertively, and defines correct reactions to that.

Note finally that RSS\ref{rss-vis} is explicitly formulated in terms of obligations and ability. However, we will not study RSS\ref{rss-vis} and RSS\ref{rss-avoid} as they are currently too vague for formalization.

\subsection{Formalization of RSS Rules}
\label{sec:rss formalization}
\noindent\textbf{Formalizing RSS\ref{rss-behind}}. 
Let $\formula$ denote `collision with car ahead of me'.
A plausible formalization of RSS\ref{rss-behind} is then
\[RSS\ref{rss-behind}.\,\, \Ostit{\alpha}{\neg \formula}\]
That is, $\alpha$ ought to see to it that there is no collision with a car ahead of it.
A positive aspect of this formalization is that if at some $m$, a rear-end collision is inevitable, then $RSS\ref{rss-behind}$ ceases to hold: $\forall \formula \implies \neg \Ostit{\alpha}{\neg \formula}$.
This provides an automatic and interpretable update of control objectives.
In a deployed system, an automatic proof engine could update which obligations hold and which don't, based on the current situation~\cite{Arkoudas05towardethical}.
An alternative formalization is 
\[RSS\ref{rss-behind}r.\,\, \Ostit{\alpha}{\neg [\alpha \, dstit\!:\formula]}\]
This says that $\alpha$ should see to it that it does not deliberately ensure an accident $\formula$.
This form of obligation is called \textit{refraining}~\cite{HortyB95Dstit}: $\alpha$ \textit{refrains} from hitting anyone from behind.
If a rear-end collision is inevitable at some $m$, then $RSS\ref{rss-behind}r$ still holds (unlike $RSS\ref{rss-behind}$) \emph{and is trivially satisfied}.
This might be computationally cheaper than having to use a proof engine to tell us that the obligation no longer holds.

In the general case, some actions at $m$ guarantee a collision, some guarantee no collision, and the rest don't guarantee either: the future could evolve either way.
If we are interested in guaranteeing no collision over a long horizon, then, because of non-determinism, it is unlikely that any action in the present moment can guarantee that.
In such a case $RSS\ref{rss-behind}$ will be violated repeatedly in a rather trivial way; on the other hand, $RSS\ref{rss-behind}r$ is more permissive, since it can be met by taking any optimal action that allows the \textit{possibility} of no collision over the horizon. 
A lower-level controller, running at a higher rate, could then ensure freedom from collision forever.
\\

\noindent\textbf{Formalizing RSS\ref{rss-cutin}}. 
Define formulas, $\psi: $ a non-reckless cut-in, and $\psi_r$: a reckless cut-in.
Then RSS\ref{rss-cutin} is formalizable as 

\[RSS\ref{rss-cutin}. \Ostit{\alpha}{\always (\psi\lor \psi_r \implies \neg \psi_r)}.\]

That is, $\alpha$ should see to it that always, if a cut-in happens, then it is a non-reckless cut-in.
\\

\noindent\textbf{Formalizing RSS\ref{rss-row}}.
Formalizing this rule requires some care. 
First, note that RSS\ref{rss-row} should probably be amended to say that `Right-of-way is given, not taken, \emph{and some car is given the right-of-way}' - otherwise, traffic comes to a standstill.
We will first focus on formalizing the prohibition (nobody should take the right-of-way), then we will formalize the positive obligation (somebody must be given it).

Let $\Agent = \{\alpha,\beta,\gamma,\ldots\}$ be a finite set of agents.
Define the atomic propositions
$GROW_\beta^\alpha$: $\beta$ gives right-of-way to $\alpha$ and
$p_\alpha$: $\alpha$ proceeds/drives through the conflict region.
Then
$TROW_\alpha \defeq p_\alpha \land \neg (GROW_\beta^\alpha \land GROW_\gamma^\alpha \land \ldots)$
formalizes taking the right-of-way: $\alpha$ proceeds without being given the right-of-way by everybody.
 We could now express the prohibition in RSS\ref{rss-row}: every $\alpha$ ought to see to it that it does not take the right-of-way:
 \begin{equation}
 \label{eq:rss-row prohib}
 RSS\ref{rss-row}prohib0. \, \bigwedge_{\alpha \in \Agent} \Ostit{\alpha}{\neg TROW_\alpha}
 \end{equation}

The difficulty with this formulation is that it could lead to $\alpha$ being obliged to force \textit{everybody else} to give it the right-of-way - something over which, a priori, it has no control. 
To see this, we need the following.
\begin{proposition}
	\label{prop:force others}
	Given obligations $A$ and $B$, 	$\Ostit{\alpha}{A \lor B} \land (\forall \neg A) \implies \Ostit{\alpha}{B}$
\end{proposition}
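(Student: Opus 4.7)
The plan is to unfold the semantics of the Dominance Ought (Def.~\ref{def:ought semantics}) on both sides of the implication and exploit the hypothesis $\forall \neg A$ to collapse the disjunction $A \lor B$ to $B$ at the relevant moment. The argument is purely semantic at a single moment $m$ and should require no reasoning about the dominance order $\preceq$ or about which particular actions are optimal.

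First, I would fix an arbitrary moment/history pair $m/h$ in a model $\Model$ at which both conjuncts of the antecedent hold. By Def.~\ref{def:ought semantics} applied to $\Ostit{\alpha}{A\lor B}$, every optimal action $K \in \Optimalam$ satisfies $K \subseteq |A\lor B|_m^\Model$. By the standard \CTLs~semantics of the path quantifier, $\Model,m/h \models \forall \neg A$ means that $\neg A$ holds along every history in $H_m$, so by the definition of the proposition expressed by a formula (Eq.~\eqref{eq:Amm}), $|A|_m^\Model = \emptyset$.

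Second, I would observe that the DAU semantics for boolean connectives gives $|A\lor B|_m^\Model = |A|_m^\Model \cup |B|_m^\Model$. Combined with $|A|_m^\Model = \emptyset$, this yields $|A\lor B|_m^\Model = |B|_m^\Model$. Substituting into the inclusion from the first step, every $K \in \Optimalam$ satisfies $K \subseteq |B|_m^\Model$, which is exactly the definition of $\Model,m/h \models \Ostit{\alpha}{B}$.

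The proof has essentially no obstacle; the only point that warrants care is keeping the two quantifications over histories straight, namely that $\forall \neg A$ quantifies over \emph{all} histories in $H_m$ (hence empties $|A|_m^\Model$ outright, not merely on optimal actions), whereas the Dominance Ought quantifies only over histories lying in optimal actions. It is precisely because $\forall \neg A$ is the stronger, model-level statement that the disjunct $A$ can be discarded uniformly. It is worth flagging in the write-up that this also makes clear why the proposition is interesting for the RSS discussion that follows: if the other agents are guaranteed not to grant right-of-way ($\forall \neg A$), the joint obligation $\Ostit{\alpha}{A\lor B}$ devolves onto $\alpha$ alone as $\Ostit{\alpha}{B}$, even though $\alpha$ may have no means to bring $B$ about on its own.
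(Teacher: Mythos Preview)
Your proposal is correct and follows essentially the same argument as the paper's own proof: unfold the Dominance Ought semantics to get $K\subseteq |A\lor B|_m$ for all optimal $K$, use $\forall\neg A$ to conclude $|A|_m=\emptyset$, observe $|A\lor B|_m=|A|_m\cup|B|_m=|B|_m$, and read off $\Ostit{\alpha}{B}$. The additional remark distinguishing the universal quantification over $H_m$ from the quantification over optimal actions is a nice clarification, but the underlying route is identical.
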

In words, if $\alpha$ ought to ensure $A$ or $B$ at $m/h$, but every available history violates $A$, then its obligation is effectively to ensure $B$.
\begin{proof}
Assume that $m/h \models \Ostit{\alpha}{A \lor B} \land \forall \neg A$.
By definition of the dominance ought, 
for all $K \in \Optimalam, K\subseteq |A\lor B|_m$.
And by definition of $|A|_m$ (Eq.~\ref{eq:Amm}), $|A\lor B|_m = |A|_m \cup |B|_m$.
We also have that $m/h\models \forall \neg A$, i.e., $m/h' \models \neg A$ for all $h' \in H_m$; thus $|A|_m = \emptyset$.
Therefore $\forall K \in \Optimalam, K\subseteq |B|_m$, which is the definition of $m/h \models \Ostit{\alpha}{B}$.
\end{proof}
Applied to Eq.~\eqref{eq:rss-row prohib} with $A=\neg p_\alpha$ and $B=\land_{\beta\neq \alpha}GROW_\beta^\alpha$, Thm.~\ref{prop:force others} says that if $\alpha$ is in a situation where it has no choice but to proceed (e.g. as a result of slippage on a wet road, say), then its obligation is to see to it that everybody else gives it the right-of-way, which is unreasonable.

Instead, we adopt a more passive attitude: every agent sees to it that if they are not given the right-of-way, then they do not pass.
Letting atomic proposition $g_\alpha$ denote that right-of-way is Granted to $\alpha$,
\begin{equation}
\label{eq:rss-row prohib 2}
RSS\ref{rss-row}prohib. \, \bigwedge_{\alpha \in \Agent} \Ostit{\alpha}{\always (\neg g_\alpha \implies \neg p_\alpha)}
\end{equation}

The positive obligation, that somebody must be given the right-of-way, seems to be a \textit{group obligation}: the \textit{group} must give right-of-way to one of its members.
Group obligations are formally defined in \cite[Ch. 6]{Horty01DLAgency}. 
Then we formalize
 \begin{equation}
 \label{eq:rss-row pos}
 RSS\ref{rss-row}pos. \, \Ostit{\Agent~}{\lor_{\alpha \in \Agent} g_\alpha}
 \end{equation}
This says the group $\Agent$ has an obligation to give right-of-way to someone, and the only choice is in \textit{who} gets it.
Finally, we formalize $RSS\ref{rss-row}$ as the conjunction of  $RSS\ref{rss-row}prohib$ and $RSS\ref{rss-row}pos$.
\\

\noindent\textbf{Formalizing assertiveness and RSS\ref{rss-assertive}}.
This rule says that if the car wants to change lanes, it shouldn't have to wait forever for the perfect gap (otherwise, traffic is stalled).
It is one way in which RSS attempts to promote `assertive driving', a style of driving that tries to obtain right-of-way in a `polite' way.
The key difficulty, of course, is to distinguish between assertive driving, which is acceptable, and aggressive driving, which is not. 
Deontic logic can help in that regard.
\emph{We model assertiveness as a permission to not drive conservatively or defensively.} 
That is, if $\chi$ is a formula that describes conservative driving behavior in a particular context $\Omega$, then driving assertively is the conditional permission
\begin{equation}
\label{eq:def assertive}
\PermC{\alpha}{\neg \dstit{\alpha}{\chi}}{\Omega}
\end{equation}
This is a \textit{permission}: it does not constitute an obligation to drive assertively.
Depending on its reward structure, the agent might choose to drive conservatively after all.
Importantly, Eq. \eqref{eq:def assertive} states that the agent can drive assertively without violating any obligations it does have.

For RSS\ref{rss-assertive}, conservative driving consists in waiting for the perfect gap before passing, that is, waiting until the other car, already in the lane, gives $\alpha$ the right-of-way. 
Thus we may take $\chi = g_\alpha \release \neg p_\alpha$, 
where, recall, $p_\alpha$ means `$\alpha$ proceeds through the conflict region' and $g_\alpha$ means `$\alpha$ is granted the right-of-way'.
Finally, with $w_\alpha$ meaning `$\alpha$ wants to change lanes' we have
\begin{equation}
\label{eq:rss-assertive}
RSS\!\ref{rss-assertive}.\quad \PermC{\alpha}{\neg \dstit{\alpha}{g_\alpha \release \neg p_\alpha}}{w_\alpha}
\end{equation}

\subsection{Application: undesirable consequence of RSS star-calculations}
\label{sec:other accidents}
One of the main tenets of RSS is that an Autonomous Vehicle (AV) is only responsible for avoiding potential accidents between itself and other cars (so-called `star calculations'); interactions between 2 other cars are not its concern~\cite[Remarks 1 and 8]{RSSv6}.
Yet everyday driving experience makes clear that our actions can be faulted for at least \textit{facilitating} an accident: e.g., by repeated braking, I may cause the car behind me to do the same, leading the car behind \textit{it} to rear-end it.
Or I might make a sudden lane change over two lanes, causing the car in the lane next to me to over-react when I speed past it, and collide with someone else.
We now show how this intuition is automatically captured by the DAU logic, and that RSS star-calculations lead to undesirable behavior of the AV.

Let $\formula \in $ \CTLs~denote a formula expressing ``Accident between two other cars'', and the accident is such that $\alpha$ can facilitate it as in the above 2 examples.
Then $\dstit{\alpha}{\formula}$ says that $\alpha$ (deliberately) sees to it that the accident happens even though it could avoid doing so;
given what we assumed about this accident, this means $\alpha$ facilitates the accident.
Then $\dstit{\alpha}{\neg \dstit{\alpha}{\formula}}$ expresses that $\alpha$ sees to it that it does \textit{not} facilitate the accident: this is a form of refraining.
Finally,
$\dstit{\alpha}{\neg \dstit{\alpha}{\neg \dstit{\alpha}{\formula}}}$ says that $\alpha$ refrains from refraining, that is, $\alpha$ does not refrain from facilitating the accident (even though it could). 
The RSS position is that it is OK for $\alpha$ to refrain from refraining~\cite[Remarks 1 and 8]{RSSv6}.
However, refraining from refraining is the same as doing. Formally~\cite[2.3.3.]{Horty01DLAgency}
\[ [\alpha \,dstit: \neg [\alpha\,dstit: \neg [\alpha\,dstit:\formula]]] \equiv \dstit{\alpha}{\formula}\]
This matches our intuition: to not refrain from facilitating an accident even though one could (left-hand side in previous equation) is the same as facilitating it (right-hand side).
In other words, under this formalization, the RSS position is tantamount to allowing AVs to facilitate accidents between others - clearly, an undesirable conclusion.
This aspect of RSS, therefore, needs refinement to take into account longer-range interactions between traffic participants.

\section{Obligation Propagation}
\label{sec:obligation propagation}
Obligations vary over time: the obligation at moment $m$ is the set of necessary conditions (formulas in the tense logic) satisfied by all histories optimal at $m$, and the set of optimal histories can change from moment to moment. 
There is thus a need to understand how obligations change over time:
for example if the agent does not act optimally at $m$, does the obligation disappear at the next moment? 
Or does it persist, perhaps in a modified form?
The formal study of obligation propagation is also a way to interpret the temporal evolution of utility-maximizing controllers: as the controller (and the environment) act, obligations change, placing new constraints on the controller.

The following examples show that these questions must be studied formally, since intuition usually fails us.
Consider the following tentative propagation pattern, in which $\formula$ is a \TenseLogic~formula:
	\begin{equation}
	\label{eq:x false prop ->}
	\Ostit{\alpha}{\Next \formula} \implies \Next \Ostit{\alpha}{\formula} 	
	\end{equation}
This says that an obligation now to ensure that $\formula$ holds at the next moment implies an obligation at the next moment to ensure that $\formula$ holds, which sounds plausible. 
However, it is not valid in DAU.
Fig.~\ref{fig:cex-next} gives a counter-example: $K_2$ is optimal at $m_1$ so $m_1/h_1 \models \Ostit{\alpha}{\Next \phi}$;
however $m_3$ is the next moment along $h_1$ and $m_3/h_1  \not \models \Ostit{\alpha}{\phi}$, so $m_1/h_1  \not \models \Next \Ostit{\alpha}{ \phi}$ and Eq. \eqref{eq:x false prop ->} is not valid. 

As a second example, the following tentative pattern says that if the agent has an obligation to ensure that $ \formula$ eventually holds, does not do so now, \textit{but it is still possible to do so at the next moment}, then at the next moment the agent still has an obligation to ensure eventually $ \formula$:
	\begin{equation}
	\label{eq:big false prop}	
	\Ostit{\alpha}{\eventually\formula} \land \neg \cstit{\alpha}{\eventually \formula} \land \Next \exists \cstit{\alpha}{\eventually\formula} \implies  \Next \Ostit{\alpha}{\eventually \formula}	
	\end{equation}
Fig. \ref{fig:cex-big} shows a counter-example to this second pattern: we have $m_1/h_2 \models \Ostit{\alpha}{\eventually \phi}$, and that $m_1/h_2 \models \Next \exists \cstit{\alpha}{\eventually \phi}$. If $K_2$ will be taken, then $m_2/h_2 \not\models \Ostit{\alpha}{\eventually \phi}$ because $K_3$ is optimal at $m_2$, so Eq. \eqref{eq:big false prop} is also invalid.
	\begin{figure}[t]
		\centering
		\begin{subfigure}{0.32\textwidth}
            \includegraphics[width=\linewidth]{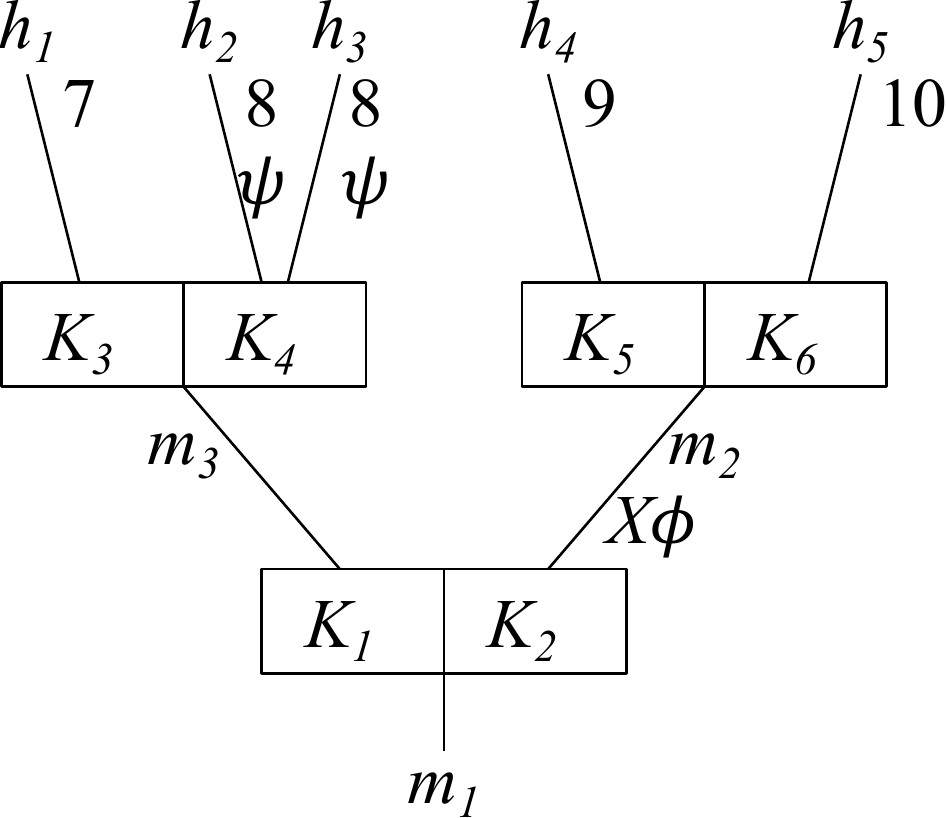}
            \caption{} \label{fig:cex-next}
          \end{subfigure}%
          \hspace*{0.15\textwidth}   
          \begin{subfigure}{0.32\textwidth}
            \includegraphics[width=\linewidth]{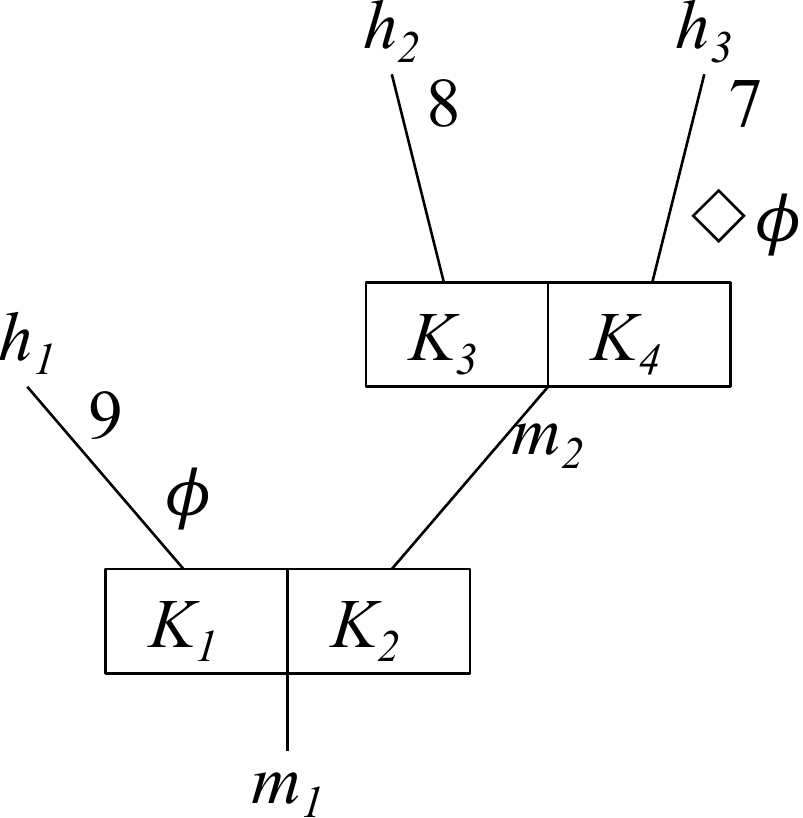}
            \caption{} \label{fig:cex-big}
          \end{subfigure}%
		\caption{Counter-examples to tentative obligation propagation patterns. (a) Pattern in Eq.~\eqref{eq:x false prop ->};  (b) Pattern in Eq.~\eqref{eq:big false prop}}
		\label{fig:cex}
	\end{figure}
	
Both counter-examples exploited the fact that along the $m/h$ pair where the left-hand side is evaluated, the agent acts non-optimally.
This suggests that to derive valid temporal propagation patterns, we must assume the agent is acting optimally. 
So we define the distinguished atomic proposition $\atakopt$ for this purpose:
\[m/h\models \atakopt \text{  iff  } \Choiceam(h) \in \Optimalam\]

The following pattern \textit{is} valid in DAU.
Let $m^+(h)$ be the moment that follows $m$ in $h$; e.g., $m_1^+(h_1) =m_3$ in Fig.~\ref{fig:cex-next}. 
\begin{equation}
\label{eq:pattern1}
\Ostit{\alpha}{\Next \formula} \land \atakopt \implies \Next \Ostit{\alpha}{\formula}
\end{equation}
\begin{proof}
	The pair $m/h$ satisfies the left-hand side iff $K \subseteq |\Next \formula|_m$ for all optimal $K$ at $m$. By $\atakopt$, we have that $\Choiceam(h)$ is optimal, thus $m/h \models \Next \formula$, which implies that $m^+(h)/h \models \formula$, which is the definition of $m/h \models  \Next \Ostit{\alpha}{\formula}$.
\end{proof}
	
Acting optimally is not always enough however.
The following valid pattern says that if $\alpha$ ought to see to it that $\eventually \formula$, acts optimally, but it is impossible to satisfy $\formula$ now, then at the next moment $\alpha$ still ought to see to it that  $\eventually \formula$.
Here, $\forall \neg \formula$ is necessary in the antecedent: the implication fails trivially without it.
\begin{equation}
\Ostit{\alpha}{\eventually \formula} \land  \atakopt \land \forall \neg \formula \implies \Next \Ostit{\alpha}{\eventually\formula} \label{eq:simple prop ev}
\end{equation}

Finally we present a pattern of obligation propagation which does not require optimal behavior, but which is only satisfied in certain models.

\begin{lemma}
	\label{lemma:ob prop general}
	With $\formula,\psi$ \TenseLogic~formulas, let $\Model$ be a stit model which satisfies the following constraint at every moment $m$:
	for all actions $K \in \Choiceam$ s.t. $K\subseteq |\neg \formula|_m$ and which contain a history $h$ s.t. 
	$m^+(h) \models \exists \cstit{\alpha}{\psi}$,
	it holds that all optimal actions in $\Optimal{\alpha}{m^+(h)}$ guarantee $\psi$.
	Then \emph{in such a model}, the following is satisfied at every index $m/h$.
	\begin{equation}
	\label{eq:ob prop general}
	\Ostit{\alpha}{\formula \lor \Next \psi} \land \cstit{\alpha}{\neg \formula } \land \Next \exists \cstit{\alpha}{\psi} \implies \Next \Ostit{\alpha}{\psi}
	\end{equation}	
\end{lemma}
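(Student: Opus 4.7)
I would prove this by directly invoking the structural constraint imposed on $\Model$ in the hypothesis, which does essentially all the work. Fix an arbitrary index $m/h$ at which the three antecedents on the left-hand side of~\eqref{eq:ob prop general} hold; unpacking the conclusion, it suffices to show that every optimal action at the successor moment $m^+(h)$ is contained in $|\psi|_{m^+(h)}$. My strategy is to instantiate the model constraint at $m$ with the pair $(K_{0},h)$, where $K_{0} := \Choiceam(h)$ is the unique action at $m$ containing $h$ --- i.e.\ the action the agent is actually performing at this index.

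The first step is to verify the two hypotheses the model constraint places on such a $(K,h)$. From the second antecedent $m/h \models \cstit{\alpha}{\neg\formula}$, Definition~\ref{def:cstit} gives $K_{0} \subseteq |\neg\formula|_{m}$, so $K_{0}$ is of the required ``$K \subseteq |\neg\formula|_{m}$'' form. Trivially $h \in K_{0}$. Finally, the third antecedent $m/h \models \Next \exists \cstit{\alpha}{\psi}$ unpacks (by the semantics of $\Next$ and the path-quantifier $\exists$) to $m^+(h) \models \exists \cstit{\alpha}{\psi}$, which is exactly the condition the constraint demands of the history chosen inside $K_{0}$.

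With both hypotheses of the constraint met, I would then read off its conclusion: every $K' \in \Optimal{\alpha}{m^+(h)}$ guarantees $\psi$, i.e.\ $K' \subseteq |\psi|_{m^+(h)}$. By Definition~\ref{def:ought semantics}, this says $m^+(h)/h' \models \Ostit{\alpha}{\psi}$ for every $h' \in H_{m^+(h)}$; specializing to the continuation of $h$ past $m$ and applying the semantics of $\Next$, we conclude $m/h \models \Next \Ostit{\alpha}{\psi}$, which is the desired right-hand side of~\eqref{eq:ob prop general}.

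The main subtlety here is not a deductive obstacle so much as a bookkeeping one: the first conjunct $\Ostit{\alpha}{\formula \lor \Next \psi}$ does not appear to be used in the argument, because the model constraint is already strong enough to force the conclusion from the two subsequent conjuncts alone. I would therefore expect the hardest part of presenting the lemma cleanly to be justifying the presence of that first conjunct --- presumably it is retained to identify which prior obligation is being \emph{propagated}, even though, once the agent commits to $\neg\formula$ while $\psi$ remains achievable at the next moment, the residual obligation $\Next \Ostit{\alpha}{\psi}$ is already dictated by the geometry of the model.
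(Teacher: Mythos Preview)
Your argument is correct, and it is genuinely different from the paper's. The paper performs a case split on whether $h$ lies in an optimal action at $m$. In Case~1 ($K_h$ optimal), the paper does \emph{not} invoke the model constraint at all: instead it uses the first conjunct $\Ostit{\alpha}{\formula\lor\Next\psi}$ to get $K_h\subseteq|\formula\lor\Next\psi|_m$, combines this with $K_h\subseteq|\neg\formula|_m$ to obtain $K_h\subseteq|\Next\psi|_m$, and then observes that $H_{m^+(h)}\subseteq K_h$ forces every history through $m^+(h)$ to satisfy $\psi$. Only in Case~2 ($K_h$ not optimal) does the paper fall back on the model constraint, exactly as you do. Your proof dispenses with the case split by noting that the model constraint is triggered by $(K_h,h)$ regardless of optimality, so one argument suffices; this is shorter and, as you correctly observe, renders the first conjunct unused for the lemma as stated. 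What the paper's decomposition buys is that its Case~1 argument is valid in \emph{every} stit model, not just those satisfying the constraint; the paper immediately recycles that argument to establish Proposition~\ref{prop:ob prop}, where the constraint is replaced by $\atakopt$ and the first conjunct becomes essential. So your route is more economical for this lemma in isolation, while the paper's route exposes the model-independent core that drives the subsequent proposition.
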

This says that if $\alpha$ has an obligation to ensure $\formula \lor \Next \psi$, guarantees $\neg \formula$ now, but next it is still possible to guarantee $\psi$, then the next obligation is to guarantee $\psi$.
\begin{proof}
	Let $m/h$ be an index in $\Model$ at which the DAU formula \eqref{eq:ob prop general} is evaluated.
	Let $K_h$ be the action to which $h$ belongs,
	and for brevity, write $m' = m^+(h)$.
	
	\underline{Case 1: $h \in \cup_{K\in \Optimalam} K$.}
	  Then $K_h\subseteq |\formula \lor \Next \psi|_m = |\formula|_m \cup |\Next \psi|_m$.
	By hypothesis, $K_h \subseteq |\neg \formula|_m$ also so 
	$K_h \subseteq |\Next \psi |_m \setminus |\neg \formula|_m$.	
	By construction, $\Optimal{\alpha}{m'}\subseteq H_{m'} \subseteq K_h$ so for every $K^* \in \Optimal{\alpha}{m'}$ and every $h'\in K^*$, $m'/h'\models \psi$, 
	which is the definition of $m/h \models \Next \Ostit{\alpha}{\psi}$.
	
	\underline{Case 2: $h \notin \cup_{K\in \Optimalam} K$.}
	From the formula antecedent, we have that $K_h \subseteq |\neg \formula|_m$ and that $m'/h \models \exists \cstit{\alpha}{\psi}$. 
	Therefore the model constraint yields that all optimal actions at $m'$ guarantee $\psi$, which is the definition of $m/h\models \Next \Ostit{\alpha}{\psi}$.	
\end{proof}

Finally, the proof also establishes the following pattern.
\begin{proposition}
	\label{prop:ob prop}
	The following is valid (i.e., satisfied in all models) in DAU:
	\[\Ostit{\alpha}{\formula \lor \Next \psi} \land \cstit{\alpha}{\neg \formula } \land \atakopt \implies \Next \Ostit{\alpha}{\psi}\]
\end{proposition}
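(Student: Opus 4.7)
\textbf{Proof proposal for Proposition~\ref{prop:ob prop}.} The plan is to directly specialize Case~1 of the proof of Lemma~\ref{lemma:ob prop general} to this setting: under the stronger hypothesis $\atakopt$, the history $h$ already lies inside an optimal action at $m$, so the model constraint that carries Case~2 of the lemma is no longer needed.

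First I would fix an arbitrary index $m/h$ satisfying the antecedent, set $K_h := \Choiceam(h)$, and write $m' := m^+(h)$. From $\atakopt$ we have $K_h \in \Optimalam$, so the Dominance Ought $\Ostit{\alpha}{\formula \lor \Next \psi}$ (Definition~\ref{def:ought semantics}) forces $K_h \subseteq |\formula \lor \Next \psi|_m = |\formula|_m \cup |\Next \psi|_m$. The conjunct $\cstit{\alpha}{\neg \formula}$ gives $K_h \subseteq |\neg \formula|_m$ by Definition~\ref{def:cstit}, so $K_h$ is disjoint from $|\formula|_m$. Combining yields $K_h \subseteq |\Next \psi|_m$.

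Next I would use the branching-time structure twice to transport this containment from $K_h$ at $m$ to every optimal action at $m'$. Because histories belonging to the same action at $m$ share their immediate successor moment, $m^+(h') = m'$ for every $h' \in K_h$; thus the statement $m/h' \models \Next \psi$ unpacks to $m'/h' \models \psi$, so $K_h \cap H_{m'} \subseteq |\psi|_{m'}$. Moreover, every history through $m'$ is consistent with the choice $K_h$ having been taken at $m$, so $H_{m'} \subseteq K_h$. Consequently every optimal action $K^* \in \Optimal{\alpha}{m'}$ satisfies $K^* \subseteq H_{m'} \subseteq K_h$, and therefore $K^* \subseteq |\psi|_{m'}$. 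By Definition~\ref{def:ought semantics} this says $m'/h \models \Ostit{\alpha}{\psi}$, which is exactly $m/h \models \Next \Ostit{\alpha}{\psi}$, as required.

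The main obstacle, such as it is, lies in the branching-time bookkeeping: namely justifying that $m^+(h') = m'$ for all $h' \in K_h$ and that $H_{m'} \subseteq K_h$. Both are consequences of the standard stit constraints on $\Choiceam$ (``no choice between undivided histories'' and the convention that an action at $m$ is the union of the histories passing through its unique successor), which the paper stipulates in the supplementary material and which are precisely the steps implicitly invoked in Case~1 of the proof of Lemma~\ref{lemma:ob prop general}. Everything else is a mechanical unpacking of the Dominance Ought, the Chellas $cstit$, and the semantics of $\Next$.
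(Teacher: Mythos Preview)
Your proposal is correct and follows essentially the same approach as the paper: the paper states that Proposition~\ref{prop:ob prop} is established by Case~1 of the proof of Lemma~\ref{lemma:ob prop general}, and that is precisely what you have spelled out. One small remark: the claim that \emph{all} $h' \in K_h$ satisfy $m^+(h') = m'$ is stronger than the standard ``no choice between undivided histories'' constraint guarantees and stronger than you actually need; your argument only uses it for $h' \in H_{m'}$ (where it is tautological), together with $H_{m'} \subseteq K_h$, which is exactly the step the paper invokes with ``By construction, $\Optimal{\alpha}{m'}\subseteq H_{m'} \subseteq K_h$.''
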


\section{Model Checking DAU}
\label{sec:mc dau}
\label{sec:model checking}
The expressive power of DAU makes the logic a useful tool in the hands of a system designer. The system designer can use DAU to specify the obligations the system ought to have.
While DAU derives obligations from stit trees, control engineers often model agents as some kind of automata. How then can we verify that the controller has the obligations the system designer has specified?
Note that having an obligation is not the same as meeting that obligation: the obligation is a constraint that might or might not be met. This section's algorithms verify that a system \textit{has} \yhl{a given} obligation, i.e. that it has the \yhl{given} constraints on its behavior.

We can ensure that an agent has an obligation by framing the question as a model checking problem. In this section we cast agents as \textit{stit automata}, and introduce novel algorithms to perform model checking for obligations. All proofs not given here can be found in the 
supplementary material.

\subsection{Stit Automata}
\label{sec:agent automata}

For a set $S$, let $S^\omega$ denote the set of infinite sequences $(a_i)_{i\in \Ne}$ with $a_i\in S$.
\begin{figure*}[t]
	\centering
	\includegraphics[height=2.25in,keepaspectratio]{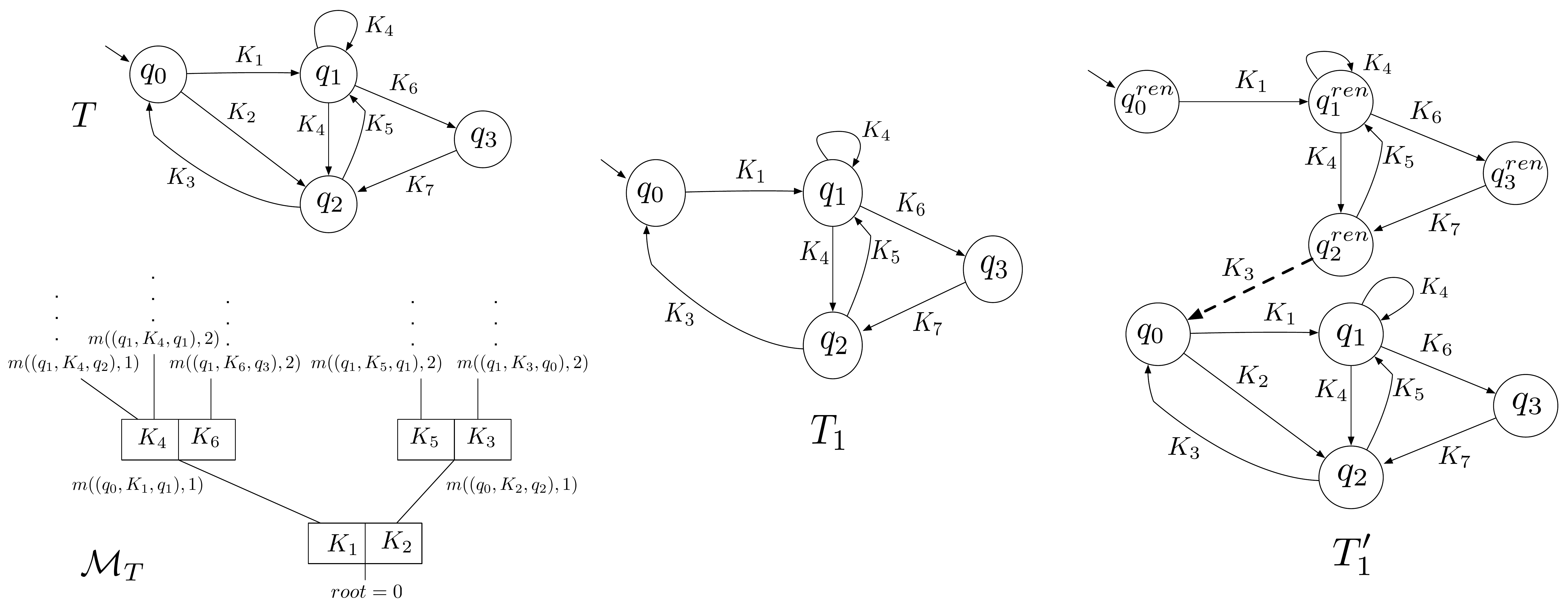}
	\caption{Left: a stit model generated by executing the stit automaton $T$ (transition weights not shown). Center and right: Automata $T_n$ and $T_n'$ used in Algorithm~\ref{algo:mc}. $T_1$ only has $K_1$ as first action, and $T_1'$ is obtained by re-naming states of $T_1$ and adding a copy of $T_1$ to it. Executions of $T_1'$ are simply the execution of $T$ that start with $K_1$.}
	\label{fig:algo}
\end{figure*}

\begin{definition}[Stit automaton]
	\label{def:stit automaton}
	Let $AP$ be a finite set of atomic propositions.
	A \emph{stit automaton} $T$ is a tuple $T = (Q, q_I, \Act, F, \Transrel, L, w,\lambda)$, where
	$Q$ is a finite non-empty set of states,
	$q_I$ is the initial state,
	$\Act$ is a finite non-empty set of actions, 
	$F \subset Q$ is a set of final states,
	$\Transrel \subset Q\times \Act \times Q$ is a finite transition relation such that if $(q,K,q')$ and $(q,K',q')$ are in $\Transrel$ then $K=K'$,
	$L:Q\rightarrow 2^{AP}$ is a labeling function,
	$w: \Transrel \rightarrow \Re$ is a weight function,
	and $\lambda: \Re^\omega \rightarrow \Re$ is an accumulation function.
\end{definition}
When dealing with multiple automata, we will sometimes write $T.q_I, T.\lambda$, etc, to clarify which automaton is involved.
Note that $T$ is a type of non-deterministic weighted automaton. 
Its unweighted counterpart $T^u$ is a classical transition system, thus for a \TenseLogic~formula $\formula$, we could model-check whether $T^u\models \formula$.
Denote by $\Transrel(q)$ the set of outgoing transitions from $q$ ($\Transrel(q) = \{(q,K,q') \in \Transrel\}$), 
by $\Act(q) = \{K \in \Act \such \exists (q,K,q') \in \Transrel\}$ the set of actions available at $q$.
Examples of $\lambda$ include the functions $\min/\max$, discounted sum and long-run average:
\begin{eqnarray}
\label{eq:discounted sum}
\min(\xi) &=& \min_i w(\xi[i])
\\
\discsum(\xi) &=& \sum_{i=0}^{\infty} \gamma^i \cdot w(\xi[i])~,0<\gamma<1
\\
\liminfavg(\xi) &=& \liminf_{n\rightarrow \infty} \frac{1}{n}\sum_{i=0}^{n}  w(\xi[i])
\end{eqnarray}

\begin{definition}[Execution]
	\label{def:stit aut execution}
	Let $T$ be a stit automaton and $q_0$ a state in $Q$.
	A \textit{$q_0$-rooted execution} $\xi$ of $T$ is a sequence of transitions of the form $\xi = (q_0,K_0,q_1)(q_1,K_1,q_2)\ldots \in \Transrel^\omega$.
	The corresponding sequence of actions $K_0,K_1,\ldots \in \Act^\omega$ is called a \textit{tactic}.
	The value of execution $\xi=\xi[0]\xi[1]\xi[2]\ldots $, where $\xi[i] \in \Delta$, is defined to be $\lambda(w(\xi[0])w(\xi[1])w(\xi[2])\ldots)$, and abbreviated $\lambda(\xi)$.	
\end{definition}
Because of non-determinism, a tactic can produce multiple executions.
A set of agents is modeled by the product of all individual stit automata, which is itself a stit automaton.
(When taking the product, we must define how weights are combined and how to construct the product's accumulation function, which are application-specific considerations.)  
Therefore the rest of this section applies to stit automata, whether they model one or multiple agents. 
We will continue to refer to one agent $\alpha$ for simplicity.

\paragraph{From stit automata to stit models.}
An automaton $T$, along with a state $q_0\in Q$, induce a stit model $\Model_{T,q_0}$ in the natural way, which we now describe somewhat informally:
state $q_0$ maps to the root moment 0 of $\Model_{T,q_0}$.
From $q_0$, $T$ has a choice of actions $\Act(q_0)$, which map to the actions available at 0 in $\Model_{T,q_0}$.
Each action $K$ in $\Act(q_0)$ non-deterministically causes one or more transitions, each of which maps to a moment in $\Model_{T,q_0}$; 
all transitions caused by a given $K$ map to moments in histories that originate in the same action $K$ in $\Model_{T,q_0}$.
And so on from each next state.
See Fig.~\ref{fig:algo} for an example.
We let $\eta: \Delta \rightarrow \Tree$ denote the map from transitions to moments, and lift it to executions in the natural way, i.e., $\eta(\xi)\defeq \eta(\xi[0])\eta(\xi[1])\ldots$
By construction, $\eta(\xi)$ is a history in $\Model_{T,q_0}$.
Its utility $\Value(\eta(\xi))$ is the $\lambda$-value of the generating execution, i.e., $\lambda(\xi)$.
The atoms labeling $\eta((q,K,q'))$ are $v(\eta((q,K,q'))) \defeq L(q')$.
The formal construction and proof of the following proposition 
can be found in the supplementary material.
\begin{proposition}
	\label{prop:Mt}
	The structure $\Model_{T,q_0}$ is a utilitarian stit model with finite $\Choiceam$ for every agent $\alpha$ and moment $m$.
\end{proposition}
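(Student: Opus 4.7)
The plan is to unfold the automaton $T$ into a tree and verify each clause of Definition~\ref{def:frame and model} along with the choice-function constraints referenced in the supplementary material. Concretely, I would take $\Tree$ to be the set of all finite $q_0$-rooted execution prefixes of $T$ (including the empty prefix, which serves as the root moment $0$), ordered by strict prefix: $m_1 < m_2$ iff $m_1$ is a proper prefix of $m_2$. Irreflexivity and transitivity are immediate, and the required linearity condition --- if $m_1<m_3$ and $m_2<m_3$ then $m_1$ and $m_2$ are comparable --- follows because any two prefixes of a common finite sequence are nested.

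I would then identify histories with infinite $q_0$-rooted executions of $T$: the set of moments on $h=\xi$ is the set of finite prefixes of $\xi$, which is plainly a maximal linear chain in $\Tree$. For the labeling, if a moment $m$ ends in state $q$ (with $q=q_I$ when $m=0$), set $v(m)=L(q)$. The value function on histories is defined by pulling $\lambda$ back along $\eta$: for the history $h$ generated by the infinite execution $\xi$, $\Value(h)=\lambda(\xi)$. This is well-defined because, by construction, $\eta$ is a bijection between infinite $q_0$-rooted executions and histories of $\Model_{T,q_0}$.

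The crux is the choice function and its axioms. For a moment $m$ ending in state $q$, set $\Choiceam = \{K_a \such a\in\Act(q)\}$, where $K_a$ consists of those $h\in H_m$ whose first transition after $m$ carries action label $a$. The condition in Definition~\ref{def:stit automaton} that two transitions with the same endpoints must carry the same action label implies that the $K_a$ form a genuine partition of $H_m$: each is non-empty (since each outgoing transition participates in some infinite execution), they are pairwise disjoint, and their union is $H_m$. Finiteness of $\Choiceam$ is inherited from finiteness of $\Act(q)\subseteq \Act$. The \emph{no-choice-between-undivided-histories} axiom --- if $h_1,h_2\in H_m$ share some $m'>m$ then they belong to the same $K_a$ at $m$ --- is immediate, since sharing $m'$ means both histories have the same prefix up to $m'$ and thus the same first action after $m$.

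The main obstacle I expect is in handling the multi-agent case the section alludes to. For a single agent the above is essentially bookkeeping, but when several agents are encoded via a product automaton, $\Choiceam$ must be read off from the $\alpha$-component of the product action label, and the \emph{independence-of-agents} axiom --- that any combination of individual agent actions at $m$ produces a non-empty subset of $H_m$ --- must be verified. That property hinges on how the product is formed (which the paper notes is application-specific), so I would split the proof into the generic single-agent verification sketched above and a brief reduction showing that, once the product is well-defined, the multi-agent axioms reduce to the single-agent axioms applied componentwise.
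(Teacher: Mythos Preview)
Your proposal is correct and matches the construction the paper sketches informally in the paragraph preceding the proposition; the paper defers the formal proof to supplementary material, and your unfolding-by-prefixes makes that sketch precise in the standard way. One minor remark: the uniqueness-of-action condition in Definition~\ref{def:stit automaton} is not actually what secures the partition of $H_m$ into the $K_a$---once moments are finite execution prefixes, each history in $H_m$ has a unique first transition after $m$ and hence a unique action label, so disjointness and coverage are automatic regardless of that condition. (That condition is more naturally read as ensuring the paper's looser phrasing of $\eta$ as a map on single transitions, rather than on prefixes, is unambiguous.) Your identification of the multi-agent independence axiom as the only non-routine obligation, contingent on how the product automaton is formed, is on point and consistent with the paper's own caveat that the product construction is application-specific.
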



Model-checking determines whether a stit automaton, at a given state, satisfies an Ought statement.
\begin{definition}
	\label{def:sat for automaton}
	Given an automaton $T$, one of its states $q$, the induced model $\Model_{T,q}$ and an obligation $A$, 
	we say that $T,q$ satisfies $\Ostit{\alpha}{A}$, written $T,q\models \Ostit{\alpha}{A}$, iff $\Model_{T,q}, 0/h\models \Ostit{\alpha}{A}$ for an arbitrary history $h \in H_0$.
\end{definition}
The history $h$ is arbitrary since the truth of $\Ostit{\alpha}{A}$ does not depend on the history but only on the moment.

The construction of $\Model_{T,q}$ roots all histories at moment 0.
However, what if the automaton can only reach state $q$ after $i$ time steps? 
Then a priori, it might be that whether an Ought holds at $q$ depends on $i$, because the accumulation function $\lambda$ can be time-dependent.
The following shows that for certain accumulation functions important in practice, the choice of root moment does not matter.
\begin{proposition}
	\label{prop:moment of evaluation}
	Given a stit automaton $T$ and an obligation $A$,
	let $q,q'$ be states of $T$ s.t. $q$ is reachable from $q'$ in $i$ transitions along an execution $\xi$.
	Let $h$ be an arbitrary history of $\Model_{T,q}$,
	let $h' = \eta(\xi)$ be the history that connects $\eta(\xi[0])$ to $\eta(\xi[i-1])$ in $\Model_{T,q'}$,
	and let $m' = \eta(\xi[i-1])$.
	Then, if $\lambda$ is discounted sum or long-run average,
	$\Model_{T,q}, 0/h\models \Ostit{\alpha}{A}$ iff $\Model_{T,q'}, m'/h'\models \Ostit{\alpha}{A}$ 
\end{proposition}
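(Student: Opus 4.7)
The plan is to reduce the claim to an isomorphism between the sub-model of $\Model_{T,q'}$ rooted at $m'$ and the full model $\Model_{T,q}$, and then argue by structural induction on $A$ that satisfaction transfers across this isomorphism, with the restriction on $\lambda$ entering only at the Ought cases.

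First I would set up the bijection explicitly. By construction of $\Model_{T,q}$ and $\Model_{T,q'}$, any execution $\xi'$ of $T$ starting at $q$ maps under $\eta$ to a history from the root $0$ of $\Model_{T,q}$, while the concatenation $\xi\cdot\xi'$ maps to a history of $\Model_{T,q'}$ passing through $m'$. This yields a bijection $\Phi: H_0^{\Model_{T,q}} \to \{h \in H_0^{\Model_{T,q'}} : m' \in h\}$, which restricts, for each moment $n$ in $\Model_{T,q}$ at depth $k$, to a bijection between $H_n$ and $H_{\Phi(n)}$, where $\Phi(n)$ is the corresponding moment at depth $i+k$ in $\Model_{T,q'}$. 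Because $\eta$'s labeling inherits from $L$ and both $n$ and $\Phi(n)$ project to the same $T$-state, we have $v(n) = v(\Phi(n))$; likewise $\Choice_\alpha^n$ and $\Choice_\alpha^{\Phi(n)}$ are both determined by $\Act$ at that $T$-state, so the action partitions correspond under $\Phi$.

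Next I would carry out a structural induction on $A$ to show that $\Model_{T,q}, n/h \models A$ iff $\Model_{T,q'}, \Phi(n)/\Phi(h) \models A$. Atomic, Boolean, temporal, $\cstit$ and $\dstit$ cases reduce to the tree/label isomorphism and the action correspondence established above (for $\cstit$/$\dstit$ we use $|B|_n = \Phi^{-1}(|B|_{\Phi(n)})$ by the inductive hypothesis). The nontrivial case is $\Ostit{\alpha}{B}$ (and similarly $\OstitC{\alpha}{B}{\formula}$), which requires that the dominance ordering $\preceq$ on $\Choice_\alpha^n$ agrees with $\preceq$ on $\Choice_\alpha^{\Phi(n)}$ under $\Phi$. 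Unpacking Eq.~\eqref{eq:proposition dominance}, this reduces to showing that for any two histories $h_1, h_2 \in H_n$, $\Value(h_1) \leq \Value(h_2)$ iff $\Value(\Phi(h_1)) \leq \Value(\Phi(h_2))$.

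This is where the restriction on $\lambda$ is used. Writing $\xi_n$ for the (finite) $T$-execution from $q'$ that produces the prefix of $\Phi(h_j)$ up to $\Phi(n)$ (common to both), and $\xi_j$ for the tail producing $h_j$, the identity $\Value(h_j) = \lambda(\xi_j)$ and $\Value(\Phi(h_j)) = \lambda(\xi_n \cdot \xi_j)$ must be compared. For discounted sum, $\discsum(\xi_n \cdot \xi_j) = C + \gamma^{|\xi_n|}\discsum(\xi_j)$ with $C$ depending only on $\xi_n$ and $\gamma^{|\xi_n|} > 0$, so the ordering of tails is preserved. For long-run average, the finite prefix $\xi_n$ contributes $0$ in the limit, giving $\liminfavg(\xi_n \cdot \xi_j) = \liminfavg(\xi_j)$ outright. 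In both cases $\Optimal{\alpha}{n}$ maps to $\Optimal{\alpha}{\Phi(n)}$ under $\Phi$, and combining this with the inductive equality $|B|_n = \Phi^{-1}(|B|_{\Phi(n)})$ closes the Ought case. The conditional Ought is handled the same way after intersecting with the $\formula$-satisfying histories, which also correspond under $\Phi$ by induction.

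The main obstacle I anticipate is not any single calculation but the bookkeeping: ensuring that the bijection $\Phi$ restricts coherently at every moment reached during the nested evaluation of deontic operators, and that the $\lambda$-invariance argument applies \emph{uniformly} at each such moment (the prefix changes with depth, but the same two identities above apply). Once the sub-tree isomorphism below $m'$ and the $\lambda$-invariance are stated as lemmas, the induction itself is mechanical; the conceptual content sits entirely in the two-line $\lambda$ computation.
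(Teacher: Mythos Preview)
Your proposal is correct and follows essentially the same route as the paper: identify the sub-tree of $\Model_{T,q'}$ rooted at $m'$ with $\Model_{T,q}$, then show that the ordering of history values is preserved because $\discsum$ transforms affinely under a fixed prefix and $\liminfavg$ is prefix-invariant, so optimal actions (and hence the Ought) coincide. The paper carries this out only at the root moment and relies on the histories satisfying the same \TenseLogic~formulas (which suffices for the restricted class of $A$ handled by the model-checker), whereas your structural induction on $A$ pushes the same isomorphism-plus-ordering argument through nested deontic operators; this is a refinement of the paper's argument rather than a different approach.
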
 

\begin{proof}
For clarity, we write $\Model = \Model_{T,q}$ and $\Model'=\Model_{T,q'}$, and write $\Model . ()$ vs $\Model'.()$ to disambiguate something in $\Model$ vs something in $\Model'$.
We will show that the trees rooted at $0/h$ in $\Model$ and $i/h'$ in $\Model'$ have the same structure and that the value ordering of their histories is the same in both models.
This implies that the same Oughts hold at both.
 
The histories of $\Model$ are images, under $\eta$, of executions that start at $q$.
Because transition $\xi[i-1]$ ends in $q$, the histories of $\Model'$ rooted at $m' =\eta(\xi[i-1])$ are also images of executions that start at $q$. 
Therefore, $\Model'.H_{m'}$ is identical to $\Model.H_0$.
In particular they satisfy the same set of \TenseLogic~formulas.
We refer to this common set of histories as $H^*$.

Take two arbitrary $h_1,h_2\in H^*$ and their pre-images $\xi_1,\xi_2$ by $\eta$.
By construction, $\xi[k+i]=\xi_1[k]=\xi_2[k]$, $k\geq 0$, and the concatenation $f_j \defeq h'[0]\ldots h'[i-1]h_j[0]h_j[1]\ldots$ is a 0-rooted history in $\Model'$, $j=1,2$.
If $\lambda=\discsum$ then $\Model.\Value(h_j) = \sum_{k\geq 0}\gamma^k w(\xi_j[k])$, while $\Model'.\Value(f_j) =  \sum_{k= 0}^{k=i-1}\gamma^{k}w(\xi[k])+ \sum_{k\geq 0}\gamma^{i+k}w(\xi_j[k])$.
Thus 
\[\Model.\Value(h_1) \leq \Model.\Value(h_2) \text{     iff     }\Model'.\Value(f_1) \leq \Model'.\Value(f_2)\]
Thus the histories in $H^*$ are identically ranked in both models, which implies that optimal actions are the same.
This, combined with the fact that they satisfy the same formulas, yields the desired conclusion.

Similarly, if $\lambda$ is $\liminfavg$, then for $j=1,2$,
\[\Model'.\Value(f_j) =   \liminf_{n\rightarrow \infty}\frac{1}{n} \left[ \sum_{0\leq k\leq i-1}w(\xi[k]) + \sum_{k\geq 0}w(\xi_j[k])\right] =  \liminf_{n\rightarrow \infty}\frac{1}{n}  \sum_{k\geq 0}w(\xi_j[k]) = \Model.\Value(h_j)\]
So histories of $H^*$ are identically ranked by $\liminfavg$ in both models, yielding the desired conclusion.
	\end{proof}


\subsection{Model Checking of Unconditional Obligations}
\label{sec:mc unconditional}
The problem of \emph{cstit model checking} is: given a stit automaton $T$ that models an agent $\alpha$, a state $q \in T.Q$, and a formula $A$ which is either a \TenseLogic~formula, or a statement of the form $\dstit{\alpha}{\formula}$ or $~\neg \dstit{\alpha}{\formula}$ where $\formula$ is a \TenseLogic~formula, determine whether $\Model_{T,q},0/h \models \Ostit{\alpha}{A}$ for some arbitrary $h \in H_{0}$.

We restrict the algorithm to statements of the above forms for conciseness of the presentation; DAU formulas with additional nesting levels can be handled by extending the algorithms we present below.

Recalling Definition \ref{def:ought semantics}, the cstit model checking problem can be broken into two parts: what is the set of optimal actions at $H_{0}$ (i.e. $K \in \Optimalar$), and do all these optimal actions guarantee the truth of $A$ (i.e. $K \subseteq |A|_{0}^{\Model}$)? If all optimal actions guarantee $A$ then, by Def. \ref{def:ought semantics}, $\Model_T$ has obligation $A$ at $0/h$. Algorithm \ref{algo:mc} solves this problem, and is discussed in depth below.

\begin{proposition}
	\label{prop:mc is possible}
	Algorithm~\ref{algo:mc} returns True iff $\Model,root/h\models \Ostit{\alpha}{A}$. 
	It has complexity $O(2\sigma(|T|+c_\lambda+|T|\cdot 2^{|\formula|}))$, where $\sigma$ is the maximum out-degree from any state in $T$, $c_\lambda$ is the cost of computing the minimum and maximum values of a tactic executed on automaton $T$, $|T|$ is the number of states and transitions in $T$, and $|\phi|$ is the size of the \TenseLogic formula in $A$.
\end{proposition}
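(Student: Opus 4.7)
The plan is to prove correctness and complexity separately, both driven by Definition~\ref{def:ought semantics}, which decomposes $\Model, 0/h \models \Ostit{\alpha}{A}$ into two independent questions at the root moment: \textbf{(i)} which actions available there are optimal, and \textbf{(ii)} does each optimal action $K$ satisfy $K \subseteq |A|_0$. I would therefore prove correctness by showing, as a phase invariant, that after the algorithm's first phase it holds exactly the set $\Optimal{\alpha}{0}$, and after its second phase it has returned True iff every member of that set guarantees $A$.

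For phase (i), the key observation is that by Proposition~\ref{prop:Mt}, the stit model $\Model_{T,q_I}$ is induced so that $\Value(\eta(\xi)) = \lambda(\xi)$; consequently the proposition-order test of Eq.~\eqref{eq:proposition dominance} between two root actions $K,K'$ reduces to $\max_{\xi \in T_K}\lambda(\xi) \le \min_{\xi \in T_{K'}}\lambda(\xi)$, where $T_K$ is the sub-automaton obtained by restricting the first action to $K$ (as in the $T_n,T_n'$ construction of Fig.~\ref{fig:algo}). Computing these extrema costs $c_\lambda$ per action. For phase (ii), verifying $K \subseteq |A|_0$ is the statement $T_K' \models \forall A$, a standard temporal-logic check; when $A$ is a pure \TenseLogic{} formula $\formula$ this is $O(|T|\cdot 2^{|\formula|})$, and when $A$ is $\dstit{\alpha}{\formula}$ or $\neg\dstit{\alpha}{\formula}$ the extra clause $|A|_0 \neq H_0$ from Definition~\ref{def:dstit} contributes only one further temporal-logic check on the full $T$ and absorbs into the same asymptotic bound. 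Together these yield that the algorithm returns True iff $\Ostit{\alpha}{A}$ holds at $0/h$.

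The complexity tally is then: $\sigma$ root actions, each contributing $O(|T|)$ for the $T_K/T_K'$ construction, $c_\lambda$ for the value extrema, and $O(|T|\cdot 2^{|\formula|})$ for temporal model checking, performed in two passes---first to identify $\Optimal{\alpha}{0}$, then to verify the guarantee on the survivors---giving the claimed $O(2\sigma(|T|+c_\lambda+|T|\cdot 2^{|\formula|}))$. The main obstacle I anticipate is the reduction of the dominance preorder $\preceq$ to a plain max/min comparison, because in general $K \preceq K'$ is quantified over background states $S \in \State_{\alpha}^m$ of other agents. Here I would argue that in the stit-automaton setting (with the product construction absorbing other agents into a single compound automaton), the non-determinism resolved \emph{inside} each root action $K$ is exactly what $H_0 \cap K$ enumerates, so dominance collapses to comparing the value sets $\{\lambda(\xi) : \xi \in T_K\}$ and $\{\lambda(\xi) : \xi \in T_{K'}\}$ pointwise---and hence to the extremal bounds computed above. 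Making this collapse precise through the construction underlying Proposition~\ref{prop:Mt} is where the real content of the correctness proof lives; once it is granted, the remainder of the argument is bookkeeping over phases and over the three possible shapes of $A$.
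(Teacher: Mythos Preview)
Your proposal is correct and follows essentially the same two-phase decomposition as the paper (whose full proof is deferred to the supplementary material): first identify $\Optimal{\alpha}{0}$ via interval comparison of extremal execution values on the $T_n'$ automata, then verify $K \subseteq |A|_0$ for each optimal $K$ by \TenseLogic{} model checking, with the $dstit$ cases handled by the extra check $T \models_{\text{\CTLs}} \forall\formula$ for the $H_0 \neq |\formula|_0$ clause.

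Two small refinements. First, the obstacle you flag---collapsing $\preceq$ to a max/min comparison---is less delicate than you suggest: the paper explicitly works with a single stit automaton (multiple agents having already been absorbed into a product), so $\State_\alpha^m$ is trivial at every moment and $K \preceq K'$ reduces immediately to $K \leq K'$, i.e.\ $\max_{h\in K}\Value(h) \leq \min_{h\in K'}\Value(h)$. No further argument through Proposition~\ref{prop:Mt} is needed beyond the identification $\Value(\eta(\xi)) = \lambda(\xi)$. Second, your accounting of the factor $2$ as ``two passes, each doing everything'' does not match the algorithm: the first loop does no temporal model checking and the second computes no value extrema. The factor $2$ is more naturally attributed to the construction $T_n' = T_n^{\text{ren}} \cup T$, which has size roughly $2|T|$, so that each \TenseLogic{} check and each $c_\lambda$ computation runs on an automaton of that size.
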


Algorithm \ref{algo:mc} begins by considering each action available to the agent at root: $K_n \in \Choicear$. For each of these actions, a version $T_n'$ of the automaton $T$ is constructed such that each of its executions is an execution of $T$ starting with action $K_n$. In this way we can determine the best ($\vnu$) and worst ($\vnl$) possible values of the executions in each action $K_n$ by analyzing the automaton $T_n'$ (this is discussed further in Section \ref{sec:computing un ln}). With the range of values $[\vnl, \vnu]$ known for each action $K_n$, we find those ranges whose $\vnu$ is not less than any $\vnl'$. These value ranges are un-dominated. The optimal actions $\Optimalar$ are those actions whose corresponding value ranges are un-dominated. This completes the first step of the algorithm: finding the optimal actions at $H_{0}$. The second step determines if all optimal actions guarantee $A$. In this algorithm $\models_{\text{\CTLs}}$ denotes the classical \CTLs~satisfaction relation. If the obligation is a \CTLs formula, then we simply check if every execution of $T_n'$ satisfies the $A$ by checking $\forall A$. If the obligation is a $dstit$ statement containing a \CTLs formula $\phi$, then we must verify two conditions: that not all actions in $\Choicear$ guarantee $\phi$, so $\exists \neg \phi$, and that every execution of $T_n'$ with $K_n \in \Optimalar$ satisfies $\phi$.

\begin{algorithm}
	\DontPrintSemicolon
	\KwData{A stit automaton $T = (Q, q_I, \Act, F, \Transrel, L, w,\lambda)$,  an obligation $A$}
	\KwResult{$\Model_T,root/h \models \Ostit{\alpha}{A}$}
	Set $root=0$\;
	Set $\Choicear=\{K \in \Act \such (q_I,K,q')\in \Delta \text{ for some }q'\} =\{K_1,\ldots,K_m\}$ \;
	\tcp*[l]{First step: find optimal actions at $root$}
	\For{$1\leq n \leq m$}{
		\tcc*[l]{Construct automaton $T_n'$ s.t. every execution of $T_n'$ is an execution of $T$ starting with action $K_n$. See Fig.~\ref{fig:algo}.}
		Create automaton $T_n$ by deleting all transitions $(q_I,K,q')$ with $K\neq K_n$ \;
		Create a copy $T_n^{\text{ren}}$ of $T_n$\;
		Create the automaton $T_n'$ as a union of $T_n^{\text{ren}}$ and $T$, with every transition $(q,K,T_n^{\text{ren}}.q_I)$ in $T_n^{\text{ren}}$ replaced by a transition $(q,K,T.q_I)$\;
		\lnl{comp vn} Compute the max value, $\vnu$, and min value, $\vnl$, of any $T_n'$ tactic starting at $q_I$\;
	}
	\tcc*[l]{An interval $ [\vnl,\vnu] $ is \textit{un-dominated} if there is no other interval $[\vnl', \vnu']$, computed in the above for-loop, s.t. $\vnl' > \vnu$}
	\lnl{undomin}Find all un-dominated intervals $[\vnl,\vnu]$\;
	\lnl{line:optimalam}Set $\Optimalar = \{K_n \in \Choicear \such [\vnl,\vnu] \text{ is un-dominated}\}$\;
	\tcc*[l]{Second step: decide whether all actions $K$  in $\Optimalar$ guarantee $A$, i.e., $K\subseteq |A|_{root}$.}
	\lnl{line:for}\For{$K_n \in \Optimalar$}{
		\uIf{$A$ is a \CTLs~formula}
		{
			\tcc{Does every execution of $T$ starting with $K_n$ satisfy $A$?}
			Use \CTLs~model-checking to check whether $T_n' \models_{\text{\CTLs}}\forall A$\;
			\If{$T_n' \not\models_{\text{\CTLs}} \forall A$ \tcp*{Optimal action $K_n$ does not guarantee $A$}}
			{\lnl{line:false 1}return False}
		}
		\lnl{line:dstit pos} \uElseIf{$A = \dstit{\alpha}{\formula}$ with $\formula\in$ \CTLs}
		{
			\tcp*[l]{This is true iff $H_{0}=|\formula|_{0}$}
			\lnl{line:all phi}Model-check whether $T\models_{\text{\CTLs}}\forall \formula$\;
			\tcc*[l]{This is true iff $K_n$ guarantees $\formula$, is not equiv. to line~\ref{line:all phi}}
			Model-check whether $T_n'\models_{\text{\CTLs}}\forall \formula$\;
			\If{$T\models_{\text{\CTLs}}\forall \formula$ or $T_n'\not \models_{\text{\CTLs}}\forall \formula$}
			{					
				\lnl{line:false 2}return False
			}
		}
		\lnl{line:dstit neg}\Else{
			\tcc{Last case: $A = \neg \dstit{\alpha}{\formula}$ with $\formula\in$ \CTLs. Similar to previous case on line \ref{line:dstit pos} with obvious modifications} 
		}
	} 
	\lnl{line:true}Return True\;
	\caption{Model checking DAU.}
	\label{algo:mc}
\end{algorithm}

\subsubsection{Computing Extremal History Utilities}
\label{sec:computing un ln}
In line~\ref{comp vn} of algorithm \ref{algo:mc}, the maximum- and minimum-valued executions of an automaton $T_n'$ must be found. This problem is related to, but distinct from, temporal logic accumulation~\cite{Boker14Accumulative} and quantitative languages~\cite{Chatterjee08Quantitative}. 
A realistic example of a $\lambda$ that can be computed is $\lambda = \min$. 
For instance, if a transition's weight $w((q, K, q'))$ is the time-to-collision when taking that transition, then the value of an execution $\lambda(\xi)$ is the shortest time-to-collision encountered along that execution. 
The best history, then, is the one with the greatest minimum time-to-collision.
To compute $\lambda(\xi)$ for $\lambda=\min$ we proceed as follows.
To avoid trivialities assume every cycle in $T_n$ is reachable.
Every infinite execution visits one or more cycles.
A \emph{simple} cycle is one that does not contain any other cycles.
A prefix is a path connecting $q_I$ to a simple cycle, and which does not itself contain a cycle.
We call an execution simple if it only loops around one simple cycle forever, possibly after traversing a prefix to get there from $q_I$.
There are finitely many simple cycles, and their prefixes are obtainable using backward reachability, so we can compute the value of every simple execution by taking the min along every connected prefix-cycle pair.
The value of a non-simple execution $\xi$ equals the value of some simple execution, since the transition of $\xi$ with minimum weight is also a transition of a simple execution, be it on a simple cycle or a prefix.
Thus, the maximum execution value $\vnu$ equals the maximum \textit{simple} execution value.
Similarly for the minimum execution value $\vnl$.

A second common accumulation function is the discounted sum function in Eq. \eqref{eq:discounted sum}. To find find the histories that carry the highest and lowest values, we cast the automaton as an extreme case of a Markov decision process (MDP).

An MDP is a control process modeled in discrete time where actions are chosen by a decision making agent, the outcomes of those actions are stochastic, and each outcome gives the agent some reward \cite{Bel}. We specify the construction of the $MDP_T$ cast from an automaton $T$ in 
the supplementary material.

Value iteration is a dynamic programming algorithm used to solve MDPs \cite{Puterman1994Mdp:}. Solving an MDP generates a policy for choosing an action at each state that optimizes some reward aggregation function $\lambda$. Following this policy from a given state $q$ (called an "optimal policy" and denoted by $\pi^*(q)$) will produce the sequence of state transitions (denoted by $\omega^*(q)$) that maximizes accumulated rewards. The expected accumulated reward for following an optimal policy from $q \in S$ is denoted by $V^*(q)$.

\begin{proposition}
	\label{prop:find extremal values}
	Given a stit automaton $T_n$,
	let $T_n^-$ be a copy of $T_n$ where the edge weights are negated,
	let $MDP_{T_n}$ be the stit MDP cast from $T_n$,
	and let $MDP_{T_n^-}$ be the stit MDP cast from $T_n^-$.
	Then, if $\lambda$ is discounted sum, the extremal values of $T_n$ are
	$\vnu = V^*(q_I)$ in $MDP_{T_n}$ and $\vnl = -V^*(q_I)$ in $MDP_{T_n^-}$
\end{proposition}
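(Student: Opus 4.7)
The plan is to reduce the claim to the standard fact that value iteration on a discounted-reward MDP computes the maximum expected discounted sum of rewards along trajectories. I would decompose the argument into three pieces: (1) establish a value-preserving bijection between $q_I$-rooted executions $\xi$ of $T_n$ and $q_I$-rooted trajectories $\tau$ of $MDP_{T_n}$; (2) appeal to Bellman optimality to equate $V^*(q_I)$ with the maximum discounted sum over such trajectories; and (3) handle the minimum case by weight reflection.

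For step (1), the construction of $MDP_{T_n}$ (given in the supplementary material) should expose both the agent's action choices and the non-deterministic transition outcomes as decisions available to the MDP policy, so that optimizing over policies coincides with optimizing over executions. Under such a cast, each execution $\xi = (q_0, K_0, q_1)(q_1, K_1, q_2)\ldots$ corresponds to a trajectory $\tau$ whose per-step rewards equal $w(\xi[i])$, and hence $\discsum(\xi) = \sum_{i \geq 0} \gamma^i w(\xi[i])$ is exactly the accumulated discounted reward along $\tau$. For step (2), the Bellman optimality equation $V^*(q) = \max_K\bigl\{\,r(q, K) + \gamma V^*(q')\,\bigr\}$ unrolls to show that $V^*(q_I)$ is precisely the supremum of discounted sums along trajectories rooted at $q_I$; combined with the bijection from step (1), this gives $\vnu = V^*(q_I)$ in $MDP_{T_n}$. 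For step (3), the identity $\min_\xi \sum_{i \geq 0} \gamma^i w(\xi[i]) = -\max_\xi \sum_{i \geq 0} \gamma^i (-w(\xi[i]))$, combined with the observation that $T_n^-$ negates exactly the weights while preserving the structure (and hence the set of executions), yields $\vnl = -V^*(q_I)$ in $MDP_{T_n^-}$.

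The main obstacle is step (1): checking that the MDP cast faithfully encodes both kinds of choice. In a generic MDP, transition outcomes are stochastic and outside the controller's reach, so $V^*$ is an expectation rather than a best-case value; for the proposition as stated to hold, the cast must either route non-deterministic outcomes through auxiliary decision states whose actions select the successor, or assign Dirac transition distributions so that each (state, action) pair leads to a single successor. Once this correspondence is pinned down, steps (2) and (3) are routine applications of the Bellman principle and a sign flip, respectively; the substantive content of the proof lives entirely in verifying that the cast makes the MDP's expected-value optimum agree with the extremal $\discsum$ over $T_n$'s executions.
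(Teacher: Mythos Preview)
The paper does not give a proof of this proposition in the main text; it defers both the proof and the construction of the stit MDP to the supplementary material, so a line-by-line comparison is not possible from what is shown here. That said, your three-step plan is the natural argument and is consistent with the hints the paper does provide: the authors describe the cast as ``an extreme case of a Markov decision process,'' which is exactly the condition you isolate in step~(1), namely that the non-deterministic transition outcomes are exposed as controllable choices (via Dirac transitions or auxiliary decision states) so that the MDP optimum is a best-case value rather than an expectation. Once that is granted, your steps~(2) and~(3) --- unrolling Bellman optimality and flipping signs --- are standard and correct. Your identification of step~(1) as the only substantive obligation, contingent on the specifics of the cast in the supplementary material, is accurate.
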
 

\subsection{Model Checking Conditional Obligations}
The problem of \emph{conditional cstit model checking} is: given a stit automaton $T$ that models an agent $\alpha$, a state $q \in T.Q$, and a formula $A$ as in Section~\ref{sec:mc unconditional} (i.e. $A$ is either in \TenseLogic, or a statement of the form $\dstit{\alpha}{\formula}$ or $~\neg \dstit{\alpha}{\formula}$ where $\formula$ is in \TenseLogic), and a finite-horizon formula $B$, determine whether $\Model_{T,q},0/h \models \OstitC{\alpha}{A}{B}$ for some $h \in H_{0}$. 

$B$ is a finite horizon condition, meaning that there exists a $\tau\geq 0$ such that every history of length $\tau$ either satisfies or violates $B$. We note that if $B$ is a state formula, then either all $q$-rooted histories satisfy $B$ or none do. To avoid such trivialities, we only consider conditions that are specified by path formulae.
In this section we introduce modifications to algorithm \ref{algo:mc} and its proof (in 
the supplementary material
) that reflect this difference in determining optimal actions.
\begin{proposition}
	\label{prop:cmc is possible}
	Algorithm~\ref{alg:cmc} returns True iff $\Model,root/h\models \OstitC{\alpha}{A}{B}$. It has complexity 
	$O(\sigma(|T| + \sigma^{\tau} |T|^2 2^{|B|} + \sigma^{\tau}\cdot c_\lambda) + \sigma |T|2^{|\formula|})$, where $\sigma$ is the maximum out-degree from any state in $T$, $c_{\lambda}$ is the cost of computing the minimum and maximum values of a tactic executed on automaton $T$, $|T|$ is the number of states and transitions in $T$, $|\phi|$ is the size of the \TenseLogic~formula in A, and $|B|$ is the size of the \TenseLogic~formula for the condition.
\end{proposition}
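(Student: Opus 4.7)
The plan is to adapt the proof of Proposition~\ref{prop:mc is possible} to the conditional setting by tracking three differences: the restriction of candidate actions to those compatible with the condition $B$, the restriction of value comparisons to histories satisfying $B$, and the cost of handling the finite-horizon formula $B$ of horizon $\tau$. Since $B$ is a finite-horizon path formula, whether a history satisfies $B$ is determined by its length-$\tau$ prefix, so we may work with finite unrollings of $T$ whenever $B$ is involved.

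For correctness, the first (``optimal-action finding'') phase must now compute $\Optimalar/|B|_0$ from Eq.~\eqref{eq:Optimalam/X}. I would establish two facts: (i) an action $K_n \in \Choicear$ belongs to $\Choicear/|B|_0$ iff $T_n'$ admits some $q_I$-rooted execution whose induced history satisfies $B$, which is decided by unrolling $T_n'$ to depth $\tau$ and running a standard \CTLs~check of $B$ on the finite result; and (ii) for each such action, $\vnu$ and $\vnl$ must be recomputed as extremal $\lambda$-values over only those executions of $T_n'$ whose length-$\tau$ prefix satisfies $B$, by adapting Proposition~\ref{prop:find extremal values} (restricting the MDP, or the prefix-cycle enumeration in the $\min$-accumulation case, to $B$-satisfying prefixes). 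The sure-thing argument behind line~\ref{undomin} of Algorithm~\ref{algo:mc} then transfers verbatim with every cross-action comparison tacitly intersected against $|B|_0$, producing $\Optimalar/|B|_0$. The second (``guarantee'') phase is unchanged: for each $K_n \in \Optimalar/|B|_0$ we verify $K_n \subseteq |A|_0$ exactly as in lines~\ref{line:for}--\ref{line:dstit neg} of Algorithm~\ref{algo:mc}, since Eq.~\eqref{eq:conditional ought} requires containment in $|A|_0$ and not in $|A|_0 \cap |B|_0$.

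For complexity, constructing each $T_n'$ contributes $O(|T|)$ per action; unrolling to depth $\tau$ and model-checking $B$ on the unrolled structure contributes $\sigma^\tau |T|^2 2^{|B|}$ per action; computing the $B$-restricted extremal values contributes $\sigma^\tau c_\lambda$ per action; and the final \CTLs~check of $A$ on the optimal actions contributes $\sigma |T| 2^{|\formula|}$ overall. Summing over the $\sigma$ first-action branches and factoring yields the stated bound. The main obstacle will be showing that min/max over \emph{infinite} $\lambda$-values of $T_n'$-executions, restricted to those with a $B$-satisfying prefix, faithfully represents dominance under $\preceq_{|B|_0}$: the $\lambda$-value depends on the whole infinite execution while $B$ constrains only length-$\tau$ prefixes, so I must argue that any $B$-satisfying prefix can be extended to an execution attaining the extremal value without disturbing its membership in $|B|_0$, which holds because $B$'s truth is already locked in by its length-$\tau$ prefix.
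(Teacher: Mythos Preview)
Your proposal is correct and matches the paper's approach. The paper's own proof is deferred to supplementary material, but the conceptual description preceding Algorithm~\ref{alg:cmc} (``unroll $T$'s executions up to depth $\tau$ and retain actions $K_n$ that contain $B$-satisfying histories'') is exactly your plan, and the concrete realization via \texttt{fragmentStep}, which enumerates each $B$-satisfying length-$\tau$ prefix as its own automaton $T_{n,l}'$ and computes extremal values on each, is one implementation of what you describe abstractly as ``restricting the MDP to $B$-satisfying prefixes.'' Your complexity decomposition (per-action construction, $\sigma^\tau$ prefixes times a $B$-check and an extremal-value computation, then the unchanged $A$-check) also recovers the stated bound term-for-term, and your observation that the guarantee phase checks $K_n\subseteq|A|_0$ rather than $K_n\subseteq|A|_0\cap|B|_0$ is the right reading of Eq.~\eqref{eq:conditional ought}.
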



Conceptually, getting the histories that satisfy $B$ can be done by brute force: unroll $T$'s executions up to depth $\tau$ and retain actions $K_n \in \Choicear$ that contain $B$-satisfying histories.
The values of these $B$-satisfying histories are compared to determine conditionally optimal actions, as per Def.~\ref{def:cstit} and Eq.~\eqref{eq:conditional ought}.
Once the conditionally optimal actions are determined, the algorithm continues as in Algo.~\ref{algo:mc}.

The actual model-checker constructs incrementally automata $T_{n,l}'$: every such automaton has one initial action $K_n$, has a single execution up to the horizon $\tau$, and behaves like the original automaton after $\tau$. 
Its unique execution up to $\tau$ satisfies $B$.
Algo.~\ref{alg:cmc} uses these automata to determine the conditionally optimal actions by comparing $B$-satisfying histories, in the same way that Algo.~\ref{algo:mc} uses $T_n'$ to compute (unconditionally) optimal actions.
Alg.~\ref{alg:frag} shows how to construct $T_{n,l}'$.
Each $T_{n,l}'$ has two components: a "\textit{fragment}" of $|B|$ followed by a copy of $T$. 
The \textit{fragment} is obtained by beginning with $T_n'$, removing all transitions from $q_I$ except for one $(q_I, K_n, q')$, forming the union between the resulting automaton and a copy of $T$, and checking this new automaton to see if there exists an execution that accepts $B$. 
If it does not, it aborts this branch (line~\ref{frag:continue}).
If it does, it sets $q_a = q'$ (that is, we change the state we remove transitions from) and repeats the process of removing transitions, taking the union with $T$, and checking that the automaton accepts $\exists B$. This process repeats a maximum of $\tau$ times, ensuring that the resulting automaton has a single history for $\tau$ moments, and accepts $B$. 
This final automaton is $T_{n,l}'$.

\begin{algorithm}
	\DontPrintSemicolon
	\SetKwFunction{FragStep}{fragmentStep}
	\KwData{A stit automaton $T = (Q, q_I, \Act, F, \Transrel, L, w,\lambda)$,  an obligation $A$,  a horizon-limited condition $B$, the condition's horizon $\tau$}
	\KwResult{$\Model_T,root/h \models \OstitC{\alpha}{A}{B}$}
	Set $root=0$\;
	Set $\Choicear=\{K \in \Act \such (q_I,K,q')\in \Delta \text{ for some }q'\} =\{K_1,\ldots,K_m\}$ \;
	\tcp*[l]{First step: find optimal actions at $root$}
	\For{$1\leq n \leq m$}{
		\tcc*[l]{Construct automaton $T_n'$ s.t. every execution of $T_n'$ is an execution of $T$ starting with action $K_n$. This is exactly like lines 4, 5, 6 in Algorithm \ref{algo:mc}}
		
		\tcc*[l]{Generate all automata whose first action is $K_n$ and have one history up to depth $\tau$ , that history satisfies $B$, and after that, it behaves like $T$}
	    \lnl{comp tnl}$\{T_{n,0}', \ldots, T_{n,l}'\}$ = \FragStep{$T_n', B, \tau, 1, q_I$}\; \tcp*{see Algorithm \ref{alg:frag} for \FragStep{}}
		\For{$1 \leq i \leq l$}{
		    \lnl{comp vn cond} Compute the max value, $u_{n,i}$, and min value, $\ell_{n,i}$, of any $T_{n,l}'$ tactic starting at $q_I$\;
		}
		Set $\vnu$ = $\FuncSty{max}_i(u_{n,i})$;
		Set $\vnl$ = $\FuncSty{max}_i(\ell_{n,i})$;
	}
	\lnl{undomin}Find all un-dominated intervals $[\vnl,\vnu]$\;
	\lnl{line:condoptimalam}Set $\Optimalar/B = \{K_n \in \Choicear \such [\vnl,\vnu] \text{ is un-dominated}\}$\;
	\tcc*[l]{Once all conditionally optimal actions are found, this algorithm proceeds exactly like algorithm \ref{algo:mc} starting from line \ref{undomin}}
	\caption{Conditional model checking DAU.}
	\label{alg:cmc}
\end{algorithm}

\begin{algorithm}
	\DontPrintSemicolon
	\SetKwFunction{FragStep}{fragmentStep}
	\KwData{A stit automaton $T = (Q, q_I, \Act, F, \Transrel, L, w,\lambda)$, a horizon-limited condition $B$, the condition's horizon $\tau$, the automaton depth $i$, an anchor state $q_a$}
	\KwResult{The set of stit automata that model fragments of $|B|$}
	Set $\{q_1,\ldots,q_m\}=\{q' \in Q \such (q_a,K,q')\in \Delta \text{ for some }q' \text{ and some } K\}$ \;
	\tcp*[l]{First step: find condition accepting actions at current $root$}
	\For{$1\leq l \leq m$}{
		\tcc*[l]{Construct automaton $T_l'$ s.t. every execution of $T_l'$ is an execution of $T$ starting with a transition to $q_l$.}
		Create automaton $T_l$ by deleting all transitions $(q_a,K,q)$ with $q\neq q_l$ \;
		Create a copy $T_l^{\text{ren}}$ of $T_l$\;
		Create the automaton $T_l'$ as a union of $T_l^{\text{ren}}$ and $T$, with every transition $(q,K,T_l^{\text{ren}}.q_{I,a})$ in $T_l^{\text{ren}}$ replaced by a transition $(q,K,T.q_{I,a})$ where $q_{I,a}$ is any state on an execution from $q_I$ to $q_a$\;
		\uIf{$T_l'\models \exists B$}
		{
		    \uIf{$i < \tau$}
		    {
		        Return \FragStep{$T_l'$, $B$, $\tau$, $i+1$, $q_l$};
		    }
		    \Else
		    {
		        Return $T_l'$;
		    }
		}
		\Else
		{
		    Continue;\label{frag:continue}
		}
	}

	\caption{$\FuncSty{fragmentStep}(T, B, \tau, i, q_a)$: Recursively generating fragments of $|B|$.}
	\label{alg:frag}
\end{algorithm}

\section{Case Study in Model Checking Self-Driving Cars Obligations}
\label{sec:case-study-in-model-checking-ethical-decisions-in-self-driving-cars}
As discussed in section \ref{sec:model checking}, it is common for a control engineer to model agents as an automaton, and it is natural to want to verify that the automata have some given obligations.
\yhl{The formalizations given thus far are required to reason about obligations while performing model checking and are a necessary component of our implementation}
To demonstrate the practical uses of DAU, we developed a software implementation of the model-checking algorithms of Section~\ref{sec:model checking}, and applied it to a controller for autonomous driving (adapted from \cite{GIRAULT04HybridHwyController}). We check the automaton for relevant \TenseLogic~missions, and for obligations and permissions related to the RSS rules.

\subsection{Implementation}
\label{sec:implementation}
We implemented our algorithms for model checking obligations in Python, using calls to the nuXmv symbolic model checker \cite{DBLP:conf/cav/CavadaCDGMMMRT14} to dispatch \TenseLogic model checking. Our implementation regards \yhl{Stit automata} models as directed graphs with edges labeled with action and weight. Operations on the graphs allow us to copy and take unions of automata as needed. The graphs can be translated to MDPs to find an action's extremal history utilities, or to a nuXmv model for \TenseLogic~model checking. The source code for our implementation can be found at \url{https://github.com/sabotagelab/MC-DAU}.

\subsection{Agent Model and Model Checking Results}
\label{sec:results}
\begin{figure}[t]
\centering
  \begin{subfigure}{0.5\textwidth}
    \includegraphics[width=\linewidth]{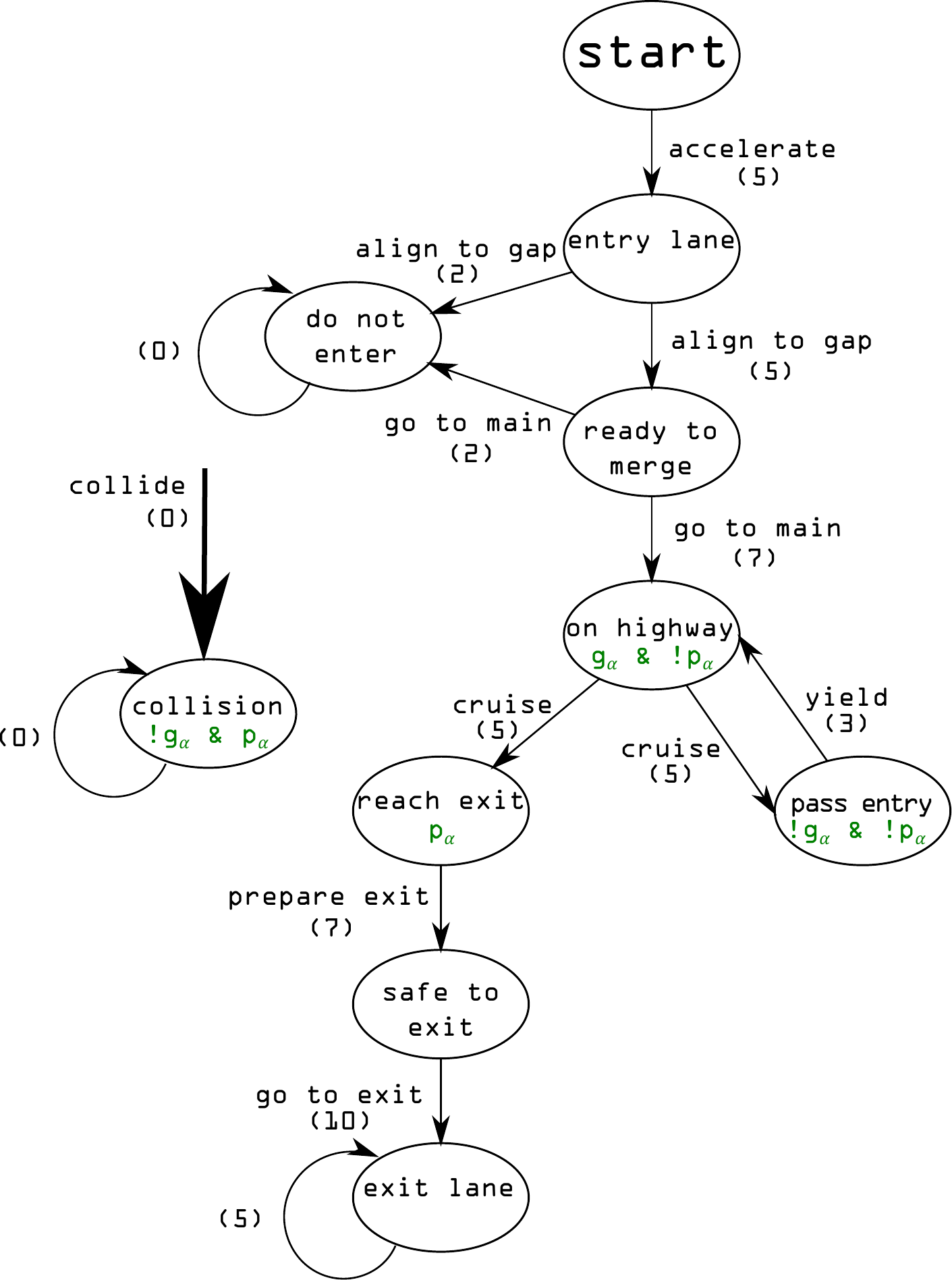}
    \caption{} \label{fig:original auto}
  \end{subfigure}%
  \hspace*{\fill}   
  \begin{subfigure}{0.5\textwidth}
    \includegraphics[width=\linewidth]{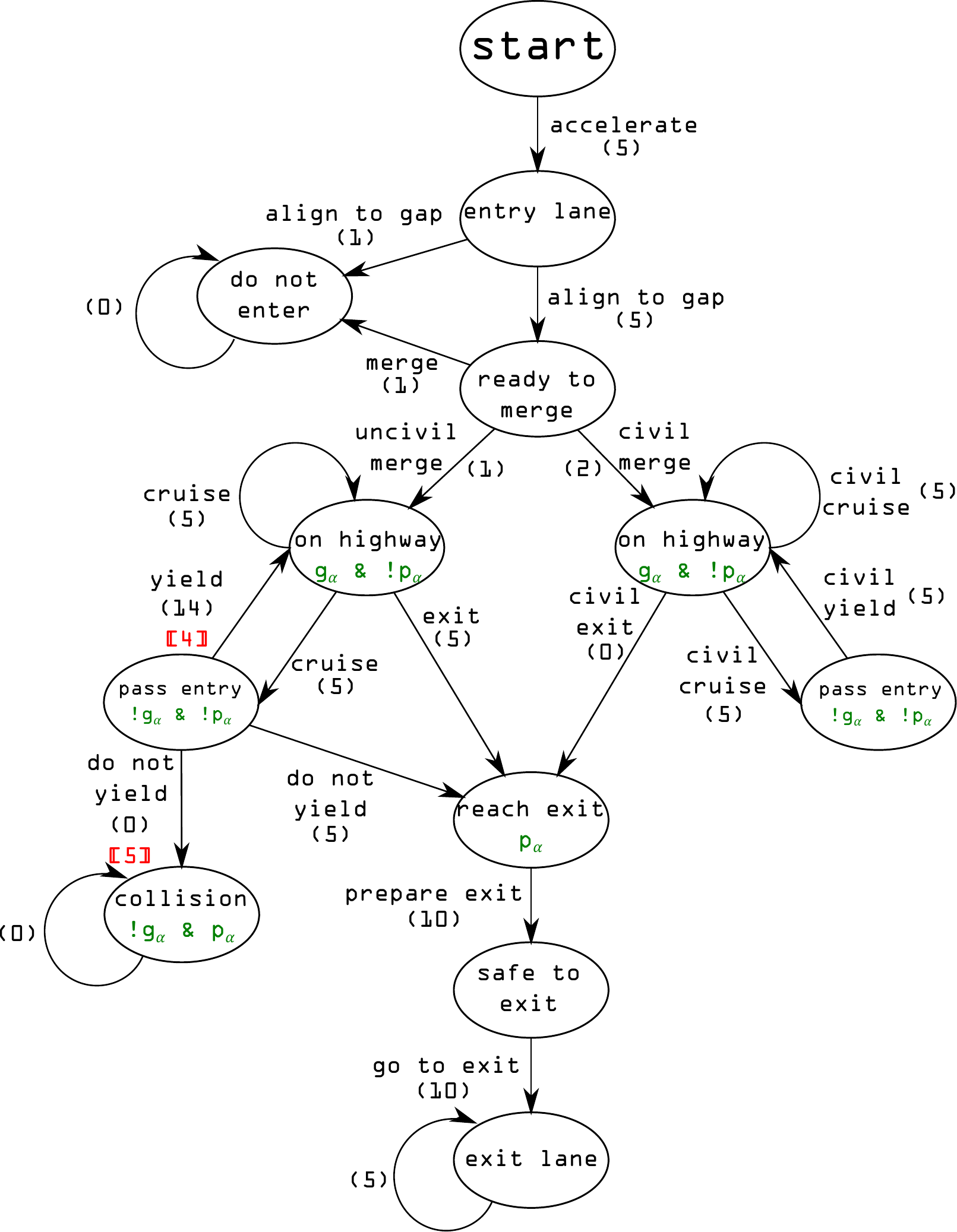}
    \caption{} \label{fig:safe redux}
  \end{subfigure}%
\caption{Highway driving agent automaton. (a) Automaton for one agent $\alpha$~\cite{GIRAULT04HybridHwyController}. Each state is labeled with the atomic propositions that hold in it, and each edge is labeled with its weight, and $\alpha$'s actions. Edges without action labels indicate loops in absorbing states. The \textit{collide} action can be taken from any state except doNotEnter, and so is denoted by the large arrow. The proposition $g_\alpha$ means the agent $\alpha$ has been given the right of way, while $p_\alpha$ means that $\alpha$ is proceeding through a conflict region. (b) Modified automaton adds an edge from \atom{passEntry} to \atom{reachExit} with action \textit{doNotYield}. It also now models implicitly a second car $\beta$, via its actions Drive and D-$\beta_{agg}$. Alternative weights are given in red brackets to make the automaton fail the ``no next collision'' obligation and gain the ``aggressive'' permission.}
\label{fig:car automaton}
\end{figure}

A hybrid continuous-time controller for autonomous highway driving is presented in~\cite{GIRAULT04HybridHwyController}.
The controller is meant to allow a car to merge onto a highway, and exit when desired.
It is shown in~\cite{GIRAULT04HybridHwyController} that if all cars are equipped with this controller, then no collisions can occur and all cars either merge and exit successfully, or drop-out, meaning that they safely abort the maneuver \yhl{and go into the doNotEnter state}.
We modeled this controller in Fig.~\ref{fig:original auto} as a stit automaton, which we will refer to as $\alpha$.
Each state is labeled with the atomic propositions that hold in it, and edges are labeled with $\alpha$'s actions, both of which are self-explanatory.
The controller's objective is to ensure safe entry, cruising, and exit; it does not determine \textit{when} to enter or exit. 
That is determined by a higher-level decision code and is captured here with atoms \atom{wantEntry} and \atom{wantExit}.
It is important to note that the collision state can be reached from almost every other state: this reflects the understanding that if  another agent, $\beta$, which is not equipped with this controller, takes a reckless action then it is impossible for $\alpha$ to avoid an accident.

We will state a number of missions, obligations, and permissions, that we might expect this automaton to satisfy, and model-check whether that is indeed the case. 
If not, we will amend the controller accordingly, thus demonstrating the value of obligation modeling and verification.

\paragraph{Missions.}
We formulate the following missions in \TenseLogic\footnote{Arguably, a logic like ATL~\cite{AlurHK02ATL} might be more appropriate here, but our focus is on DAU model-checking.}:
\begin{eqnarray}
\label{eq:car mission formulas}
\mu_1 &=& \exists \eventually (\text{\atom{onHighway}})
\\
\mu_2 &=& \exists \eventually_{\leq 4} (\text{\atom{reachExit}})
\\
\mu_3 &=& \exists \always (\neg \text{\atom{collision}})
\end{eqnarray}

The existential quantifier is used since, as noted, freedom from collision is not satisfied on all paths. 

The automaton depicted in figure \ref{fig:original auto} satisfies all the missions formulated above. The first mission ($\mu_1$) specifies that the vehicle can eventually enter the highway. 
The second ($\mu_2$) states that the vehicle can reach the exit lane within four units of time (where $\eventually_{\leq n}p$ means the proposition $p$ must be met in $n$ steps or fewer, and can be put in \ltl~syntax using Next and a bounded counter variable). 
The third mission ($\mu_3$) specifies that there is a future where the vehicle never collides. 

\paragraph{Obligations.}
For convenience, we define the \yhl{`Collision-Free'} subset of states 
$CF \defeq \{\matom{doNotEnter}\}$.

\textbf{No-collision: the role of modeling agency.}
The natural obligation $\Ostit{\alpha}{\always \neg \matom{collision}}$ is expected to fail in all states not in $CF$ since, as pointed above, there is nothing that $\alpha$ \emph{alone} can do to guarantee no collision. 
Formally, every action of $\alpha$ contains a history which satisfies $\matom{collision}$ at some moment.
The model-checker returns UNSAT in this case, as expected.
Perhaps surprisingly, the conditional obligation $\OstitC{\alpha}{\always \neg \matom{collision}}{\always \neg \matom{collision}}$ also fails in all states not in $CF$.
This obligation says that under the condition that the collision state is never visited, $\alpha$ ought to see to it that there is never a collision - which first sounds almost like a tautology.
This is where DAU's ability to model agency proves essential for a proper understanding and formalization of individual obligation.
Indeed, recall that in DAU, an agent has an obligation to ensure $A$ only if it can guarantee $A$ regardless of what other agents do (recall sure-thing reasoning and the definition of $\State_{\alpha}^m $ in Section~\ref{sec:dau}).
The condition $\always \neg \matom{collision}$ restricts our value comparisons to those actions that permit the condition to hold (Eq.~\eqref{eq:conditional ought}).
However, it is still logically false that $\alpha$ \textit{alone} can ensure no collisions: none of the conditionally optimal actions available to $\alpha$ guarantees no collisions. 
Avoidance of collisions is still a \textit{group task}, i.e. both $\alpha$ and $\beta$ must act to guarantee this - we take this up in section \ref{sec:modified auto}

\textbf{No collision next.}
Are there any states not in $CF$ at which the agent has an obligation not to collide next? To answer, we model-check the obligation
\begin{equation}
\label{eq:no next collision}
\Ostit{\alpha}{\Next \neg \matom{collision}}
\end{equation}
The model-checker confirms that this obligation can not be satisfied from any state not in $CF$. Since the agent can't guarantee another car won't collide with it, collision is included in the consequence of every action available.

\textbf{Permission vs Eventually.}
Suppose the vehicle is actually an ambulance, that occasionally has to be able to exit the highway early. 
We thus want to give it permission to exit early, without forcing that behavior, and while respecting its obligations.
So we model-check the permission 
\begin{equation}
\label{eq:permit exit 4}
\pi_1 = \Perm{\alpha}{\eventually_{\leq 4} \matom{reachExit}}
\end{equation}
from the \atom{start} state (The number 4 is rather arbitrary and is meant to suggest `early').
The model-checker informs us that the model does have this permission. 
Indeed, as long as the permission is checked from a state where \atom{reachExit} is reachable within $n$ steps, the permission 
\begin{equation}
\label{eq:permit exit n}
\Perm{\alpha}{\eventually_{\leq n} \matom{reachExit}}
\end{equation}
 will succeed for this automaton.

\textbf{Assertive vs aggressive.}
Finally, we model-check the $RSS\ref{rss-assertive}$ permission at state \atom{passEntry}.
\begin{equation}
\label{eq:permission to assert}
\pi_2 = \Perm{\alpha}{\neg \dstit{\alpha}{g_\alpha \release \neg p_\alpha}}
\end{equation}
The model-checker determines that this is satisfied.
However, we can show that this is a trivial satisfaction, which holds regardless of the weights. 
It is due to the fact that all executions of this automaton starting in \atom{passEntry} satisfy $g_\alpha \release \neg p_\alpha$.
On the other hand, consider the following aggressive DAU statement:
\begin{equation}
\label{eq:aggressive exit}
\pi_3 = \Perm{\alpha}{\dstit{\alpha}{\neg (g_\alpha \release \neg p_\alpha)}}
\end{equation}
This says that $\alpha$ is permitted to deliberately ensure that its driving is not defensive; morally, this is a less defensible permission.
It does not hold because there is no action in this automaton that guarantees $\neg(g_\alpha \release \neg p_\alpha$).

\subsection{Modified automaton.}
\label{sec:modified auto}
To draw out the effects of changing weights, we modify the automaton in Fig.~\ref{fig:original auto} to get the automaton in Fig.~\ref{fig:safe redux}, which varies in two ways. 
First, when the vehicle merges onto the highway it may choose to always yield to future traffic (by doing a `civil merge'), or to allow not yielding (by doing an `uncivil merge'). 
Second, another agent $\beta$ is modeled implicitly, removing most transitions to the collision state. This represents the agent $\beta$ avoiding collisions with agent $\alpha$ by taking a \textit{drive} action. The remaining transition to collision is taken when $\alpha$ chooses the \textit{do not yield} action and $\beta$ chooses a determined $\beta$ aggression (or \textit{D-$\beta_{agg}$}) action. We confirmed that this automaton still satisfies the mission formulae $\mu_1$, $\mu_2$, and $\mu_3$.

\textbf{No collision}.
With these changes, we can revisit the problem of specifying an obligation to not collide. While the obligation $\Ostit{\alpha}{\always \neg \matom{collision}}$ still fails from \atom{start}, it holds (though trivially) from the many states that no longer have a path to \atom{collision}. On the other hand, the obligation $\Ostit{\alpha}{\Next \neg \matom{collision}}$ in equation \eqref{eq:no next collision} non-trivially holds from the \atom{passEntry} state adjacent to \atom{collision}. 
This is ensured by weighting the \textit{yield} transition relatively heavily --- guaranteeing that the \textit{yield} action is the only optimal action.

\textbf{Permission vs Eventually}.
Suppose again this is an ambulance that occasionally needs to exit the highway early. 
The permission $\Perm{\alpha}{\eventually_{\leq n} \matom{reachExit}}$ no longer necessarily holds in states where \atom{reachExit} is reachable within $n$ steps. 
We demonstrate this from the \atom{onHighway} state reached by the \textit{civil merge} action. 
By making \textit{civil cruise} the optimal action, we guarantee that the optimal histories spend at least one moment in \atom{onHighway} before moving to \atom{reachExit}. 
This yields an ethically difficult position where an insistence on defensive driving negates the permission to exit the highway early, though it might be needed.

\textbf{Assertive vs. Aggressive.}
Finally, we model-check again the RSS-type permission in Eq.~\eqref{eq:permission to assert} from \atom{passEntry}.
This does \textit{not} hold in this model, as determined by the model-checker.
Similarly, the permission in Eq.~\eqref{eq:aggressive exit} does \textit{not} hold because no optimal action guarantees $\neg(g_\alpha \release \neg p_\alpha)$.

However, by changing the weights of this automaton as depicted by the red, bracketed weights in Fig. \ref{fig:safe redux}, we can satisfy permissions $\pi_2$ and $\pi_3$ at the cost of the ``no next collision'' obligation in Eq. \eqref{eq:no next collision}. 
By ensuring that \textit{do not yield} is an optimal action, we know that not all optimal actions guarantee $g_\alpha \release \neg p_\alpha$ (thus $\pi_2$ is satisfied), and we know that there exists an optimal action that guarantees $\neg(g_\alpha \release \neg p_\alpha)$ (thus $\pi_3$ is satisfied). As a consequence of \textit{do not yield} being counted as an optimal action, the ``no collision next'' obligation fails.


\section{Related Work}
\label{sec:related work}
\textbf{Deontic logic and autonomous systems ethics.}
The need to encode and study ethical and social obligations for human-scale CPS is well-recognized~\cite{Lim14RobotEthics,thekkilakattil15ethicsCPS,KulickiTMDeon18}, though little explored technically.
This paper follows the logicist program~\cite{Arkoudas06LogicistEthics} in approaching this problem, within which the deontic family of logics takes pride of place having been created specifically to reason about obligations.
Standard Deontic Logic has many well-known paradoxes~\cite{McNamaraChapter}, which have spurred the proposal of alternatives to remedy them~\cite{DLHandbook}. 
Some variations are used to specify legal and software contracts as in~\cite{Prisacariu2012CL}.
Alternating-time Temporal Logic (ATL) was proposed in~\cite{AlurHK02ATL} to reason about groups of agents, and used in~\cite{Broersen06StrategicDL} to reason about strategic obligations, and it will be interesting to connect the modeling of agency between DAU and ATL. 
Finally, RSS rules have been encoded in Signal Temporal Logic for the purpose of monitoring them over linear traces in~\cite{hek_rss-stl}, but notions of obligation and uncertainty were not investigated.

\textbf{Temporal propagation.}
The most relevant work on the propagation of obligations is ~\cite{Brunel08Propagation}, which takes a near-product of Standard Deontic Logic and LTL to study propagation, and ends up with a semantics that resembles DAU (albeit LTL is linear time). 
Works that integrate deontic and temporal modalities more generally include~\cite{Giordano13TDLASP} (to specify business processes),~\cite{Raimondi04automaticverification} for interpreted systems, and~\cite{agotnes09NTL} for contextualized (normed) obligations.

\textbf{Algorithmic aspects.}
Most of the work in deontic logic has been concerned with finding the `right' axioms and inference rules that formalize our intuition about obligations and permissions, with algorithmic aspects receiving comparatively little attention. 
Broader work in normative multi-agent systems relies on simulation to study, for example, ways in which social norms arise~\cite{Boella06IntroToNORMAS}.
Decision procedures exist for some logics, 
like the KED theorem prover for Standard Deontic Logic~\cite{Artosi94ked},
and the decision procedures in~\cite{BALBIANI09Decision}.
There are even fewer implemented tools, such as MCMAS, the OBDD-based checker in~\cite{Lomuscio2017} for the logic of~\cite{Raimondi04automaticverification},
and the implementation of dyadic deontic logic in Isabelle/HOL in~\cite{BenzmullerDeon18DyadicHOL}.
A proof system for a simplified version of DAU has been developed in~\cite{Murakami04utilitariandeontic,Arkoudas05towardethical} to determine whether certain obligations follow from others (a `trusted base').
We propose a model-checker, to determine whether a given automaton has an obligation, by examining directly the values it assigns to its executions.
In a deployed system, theorem-proving and model-checking are likely to play complementary roles.

\textbf{Interpretability.} 
In DAU, an agent that always performs optimal actions is one that always meets its obligations. 
Therefore, DAU can be viewed, informally, as the logic of utility maximization.
As such, it gives a \textit{logical interpretation} to the behavior of controlled systems that maximize long-term utility, such as~\cite{Gerdes2015}.
This connects DAU to the field of interpretable AI~\cite{Jha19SparseBoolean}, albeit from a non-statistical perspective.

\section{Conclusions}
\label{sec:conclusions}
We have discussed and demonstrated the use of Dominance Act Utilitarian deontic logic for the formalization of obligations and permissions for autonomous systems. We investigated the interaction of temporal and deontic modalities to find patterns for temporal propagation of obligations. We expressed self-driving car obligations from RSS in DAU, and found undesirable consequences of these norms. We introduced algorithms to allow system designers to automatically determine if a system has an obligation, and demonstrated an implementation of these algorithms.

In the pursuit of an algorithmic account of a system's obligations, it would be desirable next to synthesize given obligations by automatically adjusting the weights. DAU could also be used in tandem with inverse reinforcement learning to learn the obligations of an agent by observing its behavior. It will also be important to study the inheritance of obligations between groups and individuals, i.e. knowing how the obligation of a group of agents impacts the obligations of agents in that group. Since deontic logic was designed for the study of ethics, this work opens the way for formal ethical analysis of autonomous system design.
These considerations will help determine the suitability of DAU, and deontic logic more generally, for the design and verification of autonomous systems.

\bibliographystyle{ACM-Reference-Format}
\bibliography{iccps2017,hscc17,hscc2016,fainekos_bibrefs,hscc19,cav2019,colin_bib}
%

\end{document}